\def\eqref#1{equation~\ref{#1}}
\def\1{\bm{1}}
\def\rr{{\textnormal{r}}}
\def\vv{{\bm{v}}}
\DeclareMathAlphabet{\mathsfit}{\encodingdefault}{\sfdefault}{m}{sl}
\SetMathAlphabet{\mathsfit}{bold}{\encodingdefault}{\sfdefault}{bx}{n}
\DeclareMathOperator*{\argmax}{arg\,max}
\DeclareMathOperator*{\argmin}{arg\,min}
\newtheorem*{rep@theorem}{\rep@title}
\newcommand{\newreptheorem}[2]{%
\newenvironment{rep#1}[1]{%
 \def\rep@title{#2 \ref{##1}}%
 \begin{rep@theorem}}%
 {\end{rep@theorem}}}
\newtheorem{lemma}{Lemma}
\newtheorem{proposition}{Proposition}
\newenvironment{customthm}[1]
  {\innercustomthm}
  {\endinnercustomthm}
\NewDocumentCommand{\evalat}{sO{\big}mm}{%
  \IfBooleanTF{#1}
   {\mleft. #3 \mright|_{#4}}
   {#3#2|_{#4}}%
}
\newcommand{\w}{\mathbf{w}}
\newcommand{\x}{\mathbf{x}}
\newcommand{\s}{\mathbf{s}}
\newcommand{\f}{\mathbf{f}}
\newcommand{\uu}{\mathbf{u}}
\newcommand{\z}{\mathbf{z}}
\newcommand{\T}{\mathcal{T}}
\newcommand{\Y}{\mathcal{Y}}
\newcommand{\LL}{\mathcal{L}}
\newcommand{\ybar}{\bar{y}}
\newcommand{\what}{\hat{\w}}
\newcommand{\aalpha}{\bm{\alpha}}
\newcommand{\ddelta}{\bm{\delta}}
\newcommand{\bb}{\mathbf{b}}
\newcommand{\PP}{\mathcal{P}}
\newsavebox{\leftbox}
\newsavebox{\rightbox}
\title{Deep Frank-Wolfe \\ For Neural Network Optimization}
\author{Leonard Berrada$^1$, Andrew Zisserman$^1$ and M. Pawan Kumar$^{1, 2}$ \\
  $^1$Department of Engineering Science\\
 \hspace{3pt} University of Oxford\\
  $^2$Alan Turing Institute\\
  \texttt{\{lberrada,az,pawan\}@robots.ox.ac.uk} \\
}
\begin{document}

\maketitle

\begin{abstract}
Learning a deep neural network requires solving a challenging optimization problem: it is a high-dimensional, non-convex and non-smooth minimization problem with a large number of terms.
The current practice in neural network optimization is to rely on the stochastic gradient descent (SGD) algorithm or its adaptive variants.
However, SGD requires a hand-designed schedule for the learning rate.
In addition, its adaptive variants tend to produce solutions that generalize less well on unseen data than SGD with a hand-designed schedule.
We present an optimization method that offers empirically the best of both worlds: our algorithm yields good generalization performance while requiring only one hyper-parameter.
Our approach is based on a composite proximal framework, which exploits the compositional nature of deep neural networks and can leverage powerful convex optimization algorithms by design.
Specifically, we employ the Frank-Wolfe (FW) algorithm for SVM, which computes an optimal step-size in closed-form at each time-step.
We further show that the descent direction is given by a simple backward pass in the network, yielding the same computational cost per iteration as SGD.
We present experiments on the CIFAR and SNLI data sets, where we demonstrate the significant superiority of our method over Adam, Adagrad, as well as the recently proposed BPGrad and AMSGrad.
Furthermore, we compare our algorithm to SGD with a hand-designed learning rate schedule, and show that it provides similar generalization while often converging faster.
The code is publicly available at \url{https://github.com/oval-group/dfw}.
\end{abstract}

\section{Introduction}

Since the introduction of back-propagation \citep{Rumelhart1986}, stochastic gradient descent (SGD) has been the most commonly used optimization algorithm for deep neural networks.
While yielding remarkable performance on a variety of learning tasks, a downside of the SGD algorithm is that it requires a schedule for the decay of its learning rate.
In the convex setting, curvature properties of the objective function can be used to design schedules that are hyper-parameter free and guaranteed to converge to the optimal solution \citep{Bubeck2015}.
However, there is no analogous result of practical interest for the non-convex optimization problem of a deep neural network.
An illustration of this issue is the diversity of learning rate schedules used to train deep convolutional networks with SGD: \cite{Simonyan2015} and \cite{He2016} adapt the learning rate according to the validation performance, while \cite{Szegedy2015,Huang2017a} and \cite{Loshchilov2017} use pre-determined schedules, which are respectively piecewise constant, geometrically decaying, and cyclic with a cosine annealing.
While these protocols result in competitive or state-of-the-art results on their learning task, there does not seem to be a consistent methodology.
As a result, finding such a schedule for a new setting is a time-consuming and computationally expensive effort.

To alleviate this issue, adaptive gradient methods have been developed \citep{Zeiler2012,Kingma2015,Reddi2018}, and borrowed from online convex optimization \citep{Duchi2011}.
Typically, these methods only require the tuning of the initial learning rate, the other hyper-parameters being considered robust across applications.
However, it has been shown that such adaptive gradient methods obtain worse generalization than SGD \citep{Wilson2017}.
This observation is corroborated by our experimental results.

In order to bridge this performance gap between existing adaptive methods and SGD, we introduce a new optimization algorithm, called Deep Frank-Wolfe (DFW). The DFW algorithm exploits the composite structure of deep neural networks to design an optimization algorithm that leverages efficient convex solvers.
In more detail, we consider a composite (nested) optimization problem, with the loss as the outer function and the function encoded by the neural network as the inner one.
At each iteration, we define a proximal problem with a first-order approximation of the neural network (linearized inner function), while keeping the loss function in its exact form (exact outer function).
When the loss is the hinge loss, each proximal problem created by our formulation is exactly a linear SVM.
This allows us to employ the powerful Frank-Wolfe (FW) algorithm as the workhorse of our procedure.

There are two by-design advantages to our method compared to the SGD algorithm.
First, each iteration exploits more information about the learning objective, while preserving the same computational cost as SGD.
Second, an optimal step-size is computed in closed-form by using the FW algorithm in the dual \citep{Frank1956,Lacoste-Julien2013}.
Consequently, we do not need a hand-designed schedule for the learning rate.
As a result, our algorithm is the first to provide competitive generalization error compared to SGD, all the while requiring a single hyper-parameter and often converging significantly faster.

We present two additional improvements to customize the use of the DFW algorithm to deep neural networks.
First, we show how to smooth the loss function to avoid optimization difficulties arising from learning deep models with SVMs \citep{Berrada2018}.
Second, we incorporate Nesterov momentum \citep{Nesterov1983} to accelerate our algorithm.

We demonstrate the efficacy of our method on image classification with the CIFAR data sets \citep{Krizhevsky2009} using two architectures: wide residual networks \citep{Zagoruyko2016} and densely connected convolutional neural networks \citep{Huang2017a}; we also provide experiments on natural language inference with a Bi-LSTM on the SNLI corpus \citep{Bowman2015}.
We show that the DFW algorithm often strongly outperforms previous methods based on adaptive learning rates.
Furthermore, it provides comparable or better accuracy to SGD with hand-designed learning rate schedules.

In conclusion, our contributions can be summed up as follows:
\begin{itemize}
    \item We propose a proximal framework which preserves information from the loss function.
    \item For the first time for deep neural networks, we demonstrate how our formulation gives at each iteration (i) an optimal step-size in closed form and (ii) an update at the same computational cost as SGD.
    \item We design a novel smoothing scheme for the dual optimization of SVMs.
    \item To the best of our knowledge, the resulting DFW algorithm is the first to offer comparable or better generalization to SGD with a hand-designed schedule on the CIFAR data sets, all the while converging several times faster and requiring only a single hyperparameter.
\end{itemize}

\section{Related Work}

\paragraph{Non Gradient-Based Methods.} The success of a simple first-order method such as SGD has led to research in other more sophisticated techniques based on relaxations~\citep{Heinemann2016,Zhang2017}, learning theory~\citep{Goel2017}, Bregman iterations~\citep{Taylor2016}, and even second-order methods~\citep{Roux2008,Martens2012,Ollivier2013,Desjardins2015,Martens2015,Grosse2016,Ba2017,Botev2017,Martens2018}.
While such methods hold a lot of promise, their relatively large per-iteration cost limits their scalability in practice.
As a result, gradient-based methods continue to be the most popular optimization algorithms for learning deep neural networks.

\paragraph{Adaptive Gradient Methods.} As mentioned earlier, one of the main challenges of using SGD is the design of a learning rate schedule.
Several works proposed alternative first-order methods that do not require such a schedule, by either modifying the descent direction or adaptively rescaling the step-size \citep{Duchi2011,Zeiler2012,Schaul2013,Kingma2015,Zhang2017a,Reddi2018}.
However, as noted above, the adaptive variants of SGD sometimes provide subpar generalization~\citep{Wilson2017}.

\paragraph{Learning to Learn and Meta-Learning.} Learning to learn approaches have also been proposed to optimize deep neural networks.
\cite{Baydin2018} and \cite{Wu2018} learn the learning rate to avoid a hand-designed schedule and to improve practical performance.
Such methods can be combined with our proposed algorithm to learn its proximal coefficient, instead of considering it as a fixed hyper-parameter to be tuned.
Meta-learning approaches have also been suggested to learn the optimization algorithm \citep{Andrychowicz2016,Ravi2017,Wichrowska2017,Li2017b}.
This line of work, which is orthogonal to ours, could benefit from the use of DFW to optimize the meta-learner.

\paragraph{Optimization and Generalization.} Several works study the relationship between optimization and generalization in deep learning.
In order to promote generalization within the optimization algorithm itself, \cite{Neyshabur2015,Neyshabur2016} proposed the Path-SGD algorithm, which implicitly controls the capacity of the model.
However, their method required the model to employ ReLU non-linearity only, which is an important restriction for practical purposes.
\cite{Hardt2016,Arpit2017,Neyshabur2017,Hoffer2017} and \cite{Chaudhari2018} analyzed how existing optimization algorithms implicitly regularize deep neural networks.
However this phenomenon is not yet fully understood, and the resulting empirical recommendations are sometimes opposing \citep{Hardt2016,Hoffer2017}.

\paragraph{Proximal Methods.} The back-propagation algorithm has been analyzed in a proximal framework in \citep{Frerix2018}.
Yet, the resulting approach still requires the same hyper-parameters as SGD and incurs a higher computational cost per iteration.

\paragraph{Linear SVM Sub-Problems.} A main component of our formulation is to formulate sub-problems as linear SVMs.
In an earlier work \citep{Berrada2017}, we showed that neural networks with piecewise linear activations could be trained with the CCCP algorithm \citep{Yuille2002}, which yielded approximate SVM problems to be solved with the BCFW algorithm \citep{Lacoste-Julien2013}.
However this algorithm only updates the parameters of one layer at a time, which slows down convergence significantly in practice.
Closest to our approach are the works of \citep{Hochreiter2005} and \citep{Singh2018}.
\cite{Hochreiter2005} suggested to create a local SVM based on a first-order Taylor expansion and a proximal term, in order to lower the error of every data sample while minimizing the changes in the weights.
However their method operated in a non-stochastic setting, making the approach infeasible for large-scale data sets.
\cite{Singh2018}, a parallel work to ours, also created an SVM problem using a first-order Taylor expansion, this time in a mini-batch setting.
Their work provided interesting insights from a statistical learning theory perspective.
While their method is well-grounded, its significantly higher cost per iteration impairs its practical speed and scalability.
As such, it can be seen as complementary to our empirical work, which exploits a powerful solver and provides state-of-the-art scalability and performance.

\section{Problem Formulation}
\label{sec:problem_formulation}

Before describing our formulation, we introduce some necessary notation.
We use $\| \cdot \|$ to denote the Euclidean norm.
Given a function $\phi$, $\evalat{\partial\phi(\uu)}{\hat{\uu}}$ is the derivative of $\phi$ with respect to $\uu$ evaluated at $\hat{\uu}$.
According to the situation, this derivative can be a gradient, a Jacobian or even a directional derivative.
Its exact nature will be clear from context throughout the paper.
We also introduce the first-order Taylor expansion of $\phi$ around the point $\hat{\uu}$: $\T_{\hat{\uu}}\phi(\uu) = \phi(\hat{\uu}) + (\evalat{\partial\phi(\uu)}{\hat{\uu}})^\top(\uu - \hat{\uu})$.
For a positive integer $p$, we denote the set $\{1, 2, ..., p\}$ as $[p]$.
For simplicity, we assume that stochastic algorithms process only one sample at each iteration, although the methods can be trivially extended to mini-batches of size larger than one.

\subsection{Learning Objective}

We suppose we are given a data set $(\x_i, y_i)_{i \in [N]}$, where each $\x_i \in \mathbb{R}^d$ is a sample annotated with a label $y_i$ from the output space $\Y$.
The data set is used to estimate a parameterized model represented by the function $\f$.
Given its (flattened) parameters $\w \in \mathbb{R}^p$, and an input $\x_i \in \mathbb{R}^d$, the model predicts $\f(\w, \x_i) \in \mathbb{R}^{|\Y|}$, a vector with one score per element of the output space $\Y$.
For instance, $\f$ can be a linear map or a deep neural network.
Given a vector of scores per label $\s \in \mathbb{R}^{|\Y|}$, we denote by $\LL(\s, y_i)$ the loss function that computes the risk of the prediction scores $\s$ given the ground truth label $y_i$. For example, the loss $\LL$ can be cross-entropy or the multi-class hinge loss:
\begin{equation}
\text{(Cross-Entropy Loss)} \quad \LL_{CE}: (\s, y) \in \mathbb{R}^{|\Y|} \times \Y \mapsto \log \left(\sum_{k \in \Y} \exp (s_{k}) \right) - s_y, \label{eq:ce_loss}
\end{equation}
\begin{equation}
\text{(Multi-Class Hinge Loss)} \quad \LL_{hinge}: (\s, y) \in \mathbb{R}^{|\Y|} \times \Y \mapsto \max \left\{ \max_{k \in \Y \backslash \{y\}} \left\{ s_{k} + 1 - s_y \right\}, 0 \right\}. \label{eq:hinge_loss}
\end{equation}
The cross-entropy loss (\ref{eq:ce_loss}) tries to match the empirical distribution by driving incorrect scores as far as possible from the ground truth one.
The hinge loss (\ref{eq:hinge_loss}) attempts to create a minimal margin of one between correct and incorrect scores. The hinge loss has been shown to be more robust to over-fitting than cross-entropy, when combined with smoothing techniques that are common in the optimization literature \citep{Berrada2018}.
To simplify notation, we introduce $\f_i(\w) = \f(\w, \x_i)$ and $\LL_i(\s) = \LL(\s, y_i)$ for each $i \in [N]$.
Finally, we denote by $\rho(\w)$ the regularization (typically the squared Euclidean norm).
We now write the learning problem under its empirical risk minimization form:
\begin{equation} \label{eq:svm_primal}
\min\limits_{\w \in \mathbb{R}^p} \rho(\w) + \dfrac{1}{N}\sum\limits_{i \in [N]} \LL_i (\f_i(\w)).
\end{equation}

\subsection{A Proximal Approach}

Our main contribution is a formulation which exploits the composite nature of deep neural networks in order to obtain a better approximation of the objective at each iteration.
Thanks to the careful approximation design, this approach yields sub-problems that are amenable to efficient optimization by powerful convex solvers.
 In order to understand the intuition of our approach, we first present a proximal gradient perspective on SGD.

\paragraph{The SGD Algorithm.} At iteration $t$, the SGD algorithm selects a sample $j$ at random and observes the objective estimate $\rho(\w_t) + \LL_j ( \f_j(\w_t))$.
Then, given the learning rate $\eta_t$, it performs the following update on the parameters:
\begin{equation} \label{eq:sgd_step}
\w_{t+1} = \w_t - \eta_t \left( \evalat{\partial\rho(\w)}{\w_t} + \evalat{\partial \LL_j (\f_j(\w))}{\w_t}\right).
\end{equation}
Equation (\ref{eq:sgd_step}) is the closed-form solution of a proximal problem where the objective has been linearized by the first-order Taylor expansion $\T_{\w_t}$ \citep{Bubeck2015}:
\begin{equation} \label{eq:sgd_step_min}
\w_{t+1} = \argmin\limits_{\w \in \mathbb{R}^p} \left\{ \dfrac{1}{2 \eta_t} \|\w - \w_t \|^2 + \T_{\w_t}\rho(\w) + \T_{\w_t} [\LL_j (\f_j(\w))] \right\}.
\end{equation}
To see the relationship between (\ref{eq:sgd_step}) and (\ref{eq:sgd_step_min}), one can set the gradient with respect to $\w$ to 0 in equation (\ref{eq:sgd_step_min}), and observe that the resulting equation is exactly (\ref{eq:sgd_step}).
In other words, SGD minimizes a first-order approximation of the objective, while encouraging proximity to the current estimate $\w_t$.

However, one can also choose to linearize only a part of the composite objective \citep{Lewis2016}.
Choosing which part to approximate is a crucial decision, because it yields optimization problems with widely different properties.
In this work, we suggest an approach that lends itself to fast optimization with robust convex solvers and preserves information about the learning task by keeping an exact loss function.

\paragraph{Loss-Preserving Linearization.} In detail, at iteration $t$, with selected sample $j$, we introduce the proximal problem that linearizes the regularization $\rho$ and the model $\f_j$, but not the loss function $\LL$:
\begin{equation} \label{eq:proxfw_step_min}
\min\limits_{\w \in \mathbb{R}^p} \left\{ \dfrac{1}{2 \eta_t} \|\w - \w_t \|^2 + \T_{\w_t}\rho(\w) + \LL_j (\T_{\w_t}\f_j(\w)) \right\}.
\end{equation}

\begin{figure}[h]
\centering
\begin{minipage}{.47\textwidth}
  \centering
\includegraphics[height=0.65\linewidth]{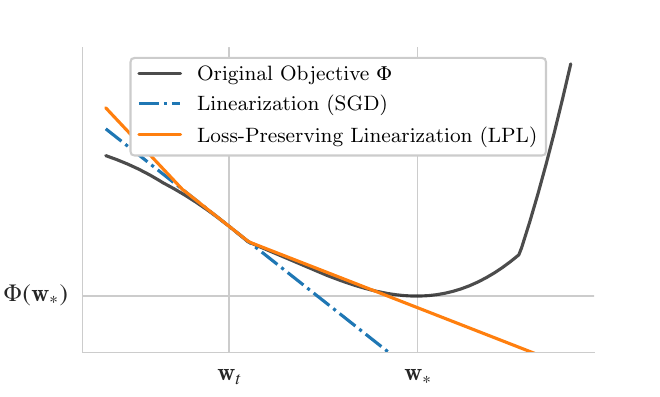}
\end{minipage}%
\hfill
\begin{minipage}{.47\textwidth}
  \centering
  \includegraphics[height=0.65\linewidth]{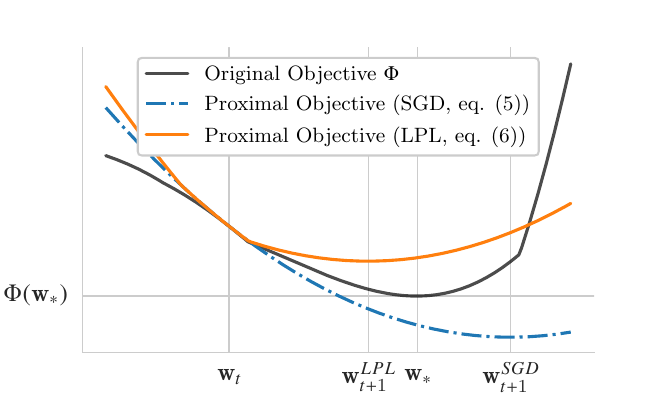}
\end{minipage}
\caption{\em We illustrate the different approximations on a synthetic composite objective function $\Phi(\w) = \LL(\f(\w))$ ($\Phi$ is plotted in black).
In this example, $\LL$ is a maximum of linear functions (similarly to a hinge loss) and $\f$ is a non-linear smooth map.
We denote the current iterate by $\w_t$, and the point minimizing $\Phi$ by $\w_*$.
On the left-hand side, one can observe how the SGD approximation is a single line (tangent at $\Phi(\w_t)$, in blue), while the LPL approximation is piecewise linear (in orange), and thus matches the objective curve (in black) more closely.
On the right-hand side, an identical proximal term is added to both approximations to visualize equations (\ref{eq:sgd_step_min}) and (\ref{eq:proxfw_step_min}).
Thanks to the better accuracy of the LPL approximation, the iterate $\w_{t+1}^\text{LPL}$ gets closer to the solution $\w_*$ than $\w_{t+1}^\text{SGD}$.
This effect is particularly true when the proximal coefficient $\frac{1}{2 \eta_t}$ is small, or equivalently, when the learning rate $\eta_t$ is large.
Indeed, the accuracy of the local approximation becomes more important when the proximal term is contributing less (e.g. when $\eta_t$ is large).
}
\label{fig:approximation}
\end{figure}

In figure \ref{fig:approximation}, we provide a visual comparison of equations (\ref{eq:sgd_step_min}) and (\ref{eq:proxfw_step_min}) in the case of a piecewise linear loss.
As will be seen, by preserving the loss function, we will be able to achieve good performance across a number of tasks with a fixed $\eta_t=\eta$.
Consequently, we will provide the first algorithm to accurately learn deep neural networks with only a single hyper-parameter while offering similar performance compared to SGD with a hand-designed schedule.

\section{The Deep Frank-Wolfe Algorithm}

\subsection{Algorithm}

We focus on the optimization of equation (\ref{eq:proxfw_step_min}) when $\LL$ is a multi-class hinge loss (\ref{eq:hinge_loss}).
The results of this section were originally derived for linear models \citep{Lacoste-Julien2013}.
Our contribution is to show for the first time how they can be exploited for deep neural networks thanks to our formulation (\ref{eq:proxfw_step_min}).
We will refer to the resulting algorithm for neural networks as Deep Frank-Wolfe (DFW).
We begin by stating the key advantage of our method.
\begin{proposition}[Optimal step-size, \citep{Lacoste-Julien2013}] \label{prop:opt_stepsize}
Problem (\ref{eq:proxfw_step_min}) with a hinge loss is amenable to optimization with Frank-Wolfe in the dual, which yields an optimal step-size $\gamma_t \in [0, 1]$ in closed-form at each iteration $t$.
\end{proposition}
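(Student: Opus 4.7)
The plan is to reduce problem (\ref{eq:proxfw_step_min}) with the multi-class hinge loss to the dual of a linear SVM, and then invoke the optimal-line-search result for Frank-Wolfe applied to SVMs from \citep{Lacoste-Julien2013}.

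First, I would substitute the Taylor expansions. Let $\mJ_j = \evalat{\partial\f_j(\w)}{\w_t}$ and $\vg_t = \evalat{\partial\rho(\w)}{\w_t}$, so that $\T_{\w_t}\f_j(\w) = \f_j(\w_t) + \mJ_j^\top(\w-\w_t)$. Plugging into (\ref{eq:hinge_loss}) and absorbing constants independent of $\w$, problem (\ref{eq:proxfw_step_min}) becomes
\begin{equation*}
\min_{\w \in \mathbb{R}^p} \; \frac{1}{2\eta_t}\|\w-\w_t\|^2 + \vg_t^\top\w + \max_{k \in \mathcal{K}}\bigl\{\vb_k^\top(\w-\w_t) + c_k\bigr\},
\end{equation*}
where $\mathcal{K}$ indexes the $|\Y|-1$ competing classes together with the zero branch of the hinge, and $\vb_k, c_k$ are read off from the columns of $\mJ_j$ and from $\f_j(\w_t)$ (with $\vb_k = \vzero$, $c_k = 0$ on the zero branch). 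This is precisely a single-example linear SVM primal in the shifted variable $\w-\w_t$, up to the affine shift $\vg_t^\top\w$ coming from the regularizer.

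Second, I would derive its Lagrangian dual. Introducing a multiplier $\aalpha$ on the probability simplex over $\mathcal{K}$ to encode the pointwise maximum and eliminating $\w$ via the stationarity condition yields an \emph{affine} primal recovery $\w(\aalpha) = \w_t - \eta_t\bigl(\vg_t + \sum_{k} \alpha_k \vb_k\bigr)$ together with a concave quadratic dual objective in $\aalpha$. This is exactly the structural-SVM block dual studied in \citep{Lacoste-Julien2013}. I would then apply one Frank-Wolfe step in this dual: the linear oracle over the simplex returns a single vertex $\ve_{k^\star}$, namely the class maximizing the dual gradient (i.e.\ the most-violated constraint), and the update takes the convex-combination form $\aalpha_{t+1} = (1-\gamma_t)\aalpha_t + \gamma_t \ve_{k^\star}$ with $\gamma_t \in [0,1]$. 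Because the dual is quadratic, its restriction to this one-dimensional segment is a one-variable quadratic, whose unconstrained minimizer is a ratio of two inner products; clipping to $[0,1]$ yields $\gamma_t$ in closed form. Propagating $\gamma_t$ through the affine map $\aalpha \mapsto \w(\aalpha)$ then gives the claimed primal update.

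The main obstacle, as I see it, is essentially bookkeeping rather than insight: one has to verify that the extra linear term $\vg_t^\top\w$ from the linearized regularizer and the Jacobian $\mJ_j$ slot cleanly into the dual derivation of \citep{Lacoste-Julien2013}, so that the quadratic structure and hence the closed-form step-size formula transfer unchanged. Once this correspondence is written down, $\gamma_t \in [0,1]$ is immediate from the convex-combination form of Frank-Wolfe and the closed form is automatic from the line search of a one-variable quadratic.
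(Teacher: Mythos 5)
Your proposal is correct and follows essentially the same route as the paper's own proof in Appendix A: rewrite the linearized proximal problem as a quadratic plus a pointwise maximum of affine functions (a single-example linear SVM), dualize over the simplex to obtain a concave quadratic with an affine primal--dual map, and perform exact line search along the Frank--Wolfe segment, clipping to $[0,1]$. The only cosmetic difference is bookkeeping: the paper folds the regularizer's gradient into each $\mathbf{a}_{\ybar}$ inside the max (which is equivalent to your separate $\vg_t^\top\w$ term since the dual variables sum to one) and absorbs the zero branch of the hinge via the task loss $\Delta(\ybar,y)$ rather than as an explicit extra index.
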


This optimal step-size can be obtained in closed-form because the hinge loss is convex and piecewise linear.
In fact, the approach presented here can be applied to any loss function  $\LL$ that is convex and piecewise linear (another example would be the $l_1$ distance for regression for instance).

Since the step-size can be computed in closed-form, the main computational challenge is to obtain the update direction, that is, the conditional gradient of the dual.
In the following result, we show that by taking a single step per proximal problem, this dual conditional gradient can be computed at the same cost as a standard stochastic gradient.
The proof is available in appendix \ref{sec:appendix_simplified_fw_algo}.

\begin{proposition}[Cost per iteration] \label{prop:cond_gradient_primal}
If a single step is performed on the dual of (\ref{eq:proxfw_step_min}), its conditional gradient is given by $-\evalat{\partial \left(\rho(\w) + \LL_{y}(\f_\x(\w)) \right)}{\w_t}$.
Given the step-size $\gamma_t$, the resulting update can be written as:
\begin{equation} \label{eq:primal_update}
\w_{t+1} = \w_t - \eta \left[\evalat{\partial \rho(\w)}{\w_t} + \gamma_t \evalat{\partial \LL_j(\f_j(\w))}{\w_t} \right]
\end{equation}
\end{proposition}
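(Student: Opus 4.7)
The plan is to recast the proximal problem (\ref{eq:proxfw_step_min}) as a convex-concave saddle point, dualise it to a QP over a simplex, and then execute a single Frank--Wolfe step in the dual while tracking the implied primal update through the primal--dual map. This is exactly the route used for linear SVMs in \citep{Lacoste-Julien2013}; the novelty here is that the linearisation of $\f_j$ in (\ref{eq:proxfw_step_min}) makes the same derivation go through verbatim, with $\f_j(\w_t)$ and the Jacobian $J_t := \evalat{\partial\f_j(\w)}{\w_t}$ playing the role of the feature vector.

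First I would expand $\T_{\w_t}\rho$ and complete the square in the proximal term, absorbing the $\rho$-gradient into a shifted centre $\tilde{\w}_t := \w_t - \eta\,\evalat{\partial\rho(\w)}{\w_t}$. Writing the multi-class hinge loss in its support-function form $\LL_j(\s) = \max_{\aalpha \in \Delta}\sum_k \alpha_k\bigl(\ell_k + (\ve_k - \ve_{y_j})^\top \s\bigr)$ over an appropriate simplex $\Delta$ (augmented by a zero vertex to encode the $\max\{\cdot,0\}$) turns (\ref{eq:proxfw_step_min}) into a saddle point in $(\w,\aalpha)$. Swapping $\min$ and $\max$ (strong duality: the inner quadratic is strongly convex, $\Delta$ is compact) and eliminating $\w$ analytically yields a dual QP $\max_{\aalpha\in\Delta}D(\aalpha)$ together with the primal--dual map $\w^\star(\aalpha) = \tilde{\w}_t - \eta J_t^\top M\aalpha$, where $M$ collects the columns $\ve_k-\ve_{y_j}$. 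Initialising the dual at $\aalpha^{(0)} = \vzero$ gives $\w^\star(\vzero) = \tilde{\w}_t$, which already accounts for the $\partial\rho$ term in (\ref{eq:primal_update}).

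Second, the FW conditional gradient at $\aalpha^{(0)}$ maximises an affine function over $\Delta$ and therefore selects a vertex $\ve_{k^\star}$, with $k^\star$ attaining the argmax inside the hinge at $\f_j(\w_t)$. By the chain rule, $J_t^\top(\ve_{k^\star}-\ve_{y_j})$ is a subgradient of $\LL_j\circ\f_j$ at $\w_t$, so the primal image of that vertex is $\w^\star(\ve_{k^\star}) = \w_t - \eta\,\evalat{\partial(\rho+\LL_j\circ\f_j)(\w)}{\w_t}$, establishing that the (primal) conditional gradient direction is $-\evalat{\partial(\rho+\LL_j\circ\f_j)(\w)}{\w_t}$. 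A dual step of length $\gamma_t\in[0,1]$, i.e.\ $\aalpha = \gamma_t\,\ve_{k^\star}$, fed through $\w^\star$ then yields $\w_{t+1}=\w_t-\eta\bigl[\evalat{\partial\rho(\w)}{\w_t}+\gamma_t\,\evalat{\partial\LL_j(\f_j(\w))}{\w_t}\bigr]$, which is exactly (\ref{eq:primal_update}).

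The main subtlety I expect is explaining the asymmetry between $\rho$ and $\LL_j$ in the final formula: $\rho$ appears without a $\gamma_t$ factor while $\LL_j$ appears with one. This is because the linearisation of $\rho$ is absorbed into the proximal centre $\tilde{\w}_t$ before dualisation and contributes no dual variable, whereas the loss is kept exact, dualised, and therefore its subgradient gets rescaled by the FW step size. The degenerate case $\LL_j(\f_j(\w_t))=0$ (where the hinge is inactive and a zero subgradient is admissible) is handled uniformly by the zero vertex in the augmented simplex; the remaining manipulations are the standard FW-on-SVM calculations of \citep{Lacoste-Julien2013}.
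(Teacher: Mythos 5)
Your proposal is correct and follows essentially the same route as the paper's appendix proof: rewrite the proximal problem as a quadratic plus a pointwise max of linear functions, dualise to a QP over the simplex with the primal--dual map $\what=-A\aalpha$, identify the FW vertex with the max-violating label (hence the primal subgradient, using that linearisation preserves the derivative at the linearisation point), and read off the single-step update. The only differences are bookkeeping: you absorb $\evalat{\partial\rho(\w)}{\w_t}$ into a shifted proximal centre and add an explicit zero vertex, whereas the paper keeps the $\rho$-gradient in every column of $A$ and uses the ground-truth label (for which $b_y=0$) as that vertex via the initialisation $\aalpha_0=\mathds{1}_y$ --- these are algebraically equivalent since $\sum_{\ybar}\alpha_{\ybar}=1$.
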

In other words, the cost per iteration of the DFW algorithm is the same as SGD, since the update only requires standard stochastic gradients.
In addition, we point out that in a mini-batch setting, the conditional gradient is given by the average of the gradients over the mini-batch.
As a consequence, we can use batch Frank-Wolfe in the dual rather than coordinate-wise updates, with the same parallelism as SGD over the samples of a mini-batch.

As we detail in appendix \ref{sec:appendix_simplified_fw_algo}, the direction given in Proposition \ref{prop:cond_gradient_primal} is actually an inexact but close approximation to the conditional gradient when $\rho \neq 0$, and it does exactly match the conditional gradient when $\rho = 0$. In all cases, it corresponds to a feasible direction in the dual. For simplicity purposes, we still refer to this direction as the conditional gradient.

One can observe how the update (\ref{eq:primal_update}) exploits the optimal step-size $\gamma_t \in [0, 1]$ given by Proposition \ref{prop:opt_stepsize}.
There is a geometric interpretation to the role of this step-size $\gamma_t$.
When $\gamma_t$ is set to its minimal value 0, the resulting iterate does not move along the direction $\evalat{\partial \LL_j(\f_j(\w))}{\w_t}$.
Since the step-size is optimal, this can only happen if the current iterate is detected to be at a minimum of the piecewise linear approximation.
Conversely, when $\gamma_t$ reaches its maximal value 1, the algorithm tries to move as far as possible along the direction $\evalat{\partial \LL_j(\f_j(\w))}{\w_t}$.
In that case, the update is the same as the one obtained by SGD (as given by equation (\ref{eq:sgd_step})).
In other words, $\gamma_t$ can automatically decay the effective learning rate, hereby preventing the need to design a learning rate schedule by hand.

As mentioned previously, the DFW algorithm performs only one step per proximal problem.
Since problem (\ref{eq:proxfw_step_min}) is only an approximation of the original problem (\ref{eq:svm_primal}), it may be unnecessarily expensive to solve it very accurately.
Therefore taking a single step per proximal problem may help the DFW algorithm to converge faster.
This is confirmed by our experimental results, which show that DFW is often able to minimize the learning objective (\ref{eq:svm_primal}) at greater speed than SGD.

\subsection{Improvements for Deep Neural Networks}

We present two improvements to customize the application of our algorithm to deep neural networks.

\paragraph{Smoothing.}
The SVM loss is non-smooth and has sparse derivatives, which can cause difficulties when training a deep neural network \citep{Berrada2018}.
In Appendix \ref{sec:appendix_smoothing}, we derive a novel result that shows how we can exploit the smooth primal cross-entropy direction and inexpensively detect when to switch back to using the standard conditional gradient.

\paragraph{Nesterov Momentum.}
To take advantage of acceleration similarly to the SGD baseline, we adapt the Nesterov momentum to the DFW algorithm.
We defer the details to the appendix in \ref{sec:appendix_nesterov} for space reasons.
We further note that the momentum coefficient $\mu$ is typically set to a high value, say 0.9, and does not contribute significantly to the computational cost of cross-validation.

\subsection{Algorithm Summary}

The main steps of DFW are shown in Algorithm \ref{algo:dfw}. As the key feature of our approach, note that the step-size is computed in closed-form in step \ref{algo_line:step_size} of the algorithm (colored in blue).

\begin{algorithm}[h]
\caption{\em The Deep Frank-Wolfe Algorithm}\label{algo:dfw}
\begin{algorithmic}[1]
\Require proximal coefficient $\eta$, initial point $\w_0 \in \mathbb{R}^p$, momentum coefficient $\mu$, number of epochs
\State $t=0$
\State $\z_0=0$ \Comment{Momentum velocity (initialization)}
\For {each epoch}
\For {each mini-batch $\mathcal{B}$}
    \State Receive data of mini-batch $(\x_i, y_i)_{i \in \mathcal{B}}$
    \State $\forall i \in \mathcal{B}, \: \bb^{(i)}_t(\w_t) = (f_{\x_i, \ybar}(\w_t) - f_{\x_i, y_i}(\w_t) + \Delta(\ybar, y_i))_{\ybar \in \Y}$ \Comment{Forward pass}
    \State $\forall i \in \mathcal{B}, \: \s_t^{(i)} \gets \texttt{get\_s}(\bb^{(i)}_t(\w_t))$ \Comment{Dual direction (details in Appendix \ref{sec:appendix_smoothing})} \label{algo_line:dual_search}
    \State $\ddelta_t = \partial \left( \evalat{\frac{1}{|\mathcal{B}|} \sum_{i \in \mathcal{B}} (\s_t^{(i)})^\top \bb_t^{(i)}(\w) \right)}{\w_t}$ \Comment{Derivative of (smoothed) loss function}
    \State $\rr_t = \evalat{\partial \rho(\w)}{\w_t}$ \Comment{Derivative of regularization}
    \State $\color{blue} \gamma_t = (-\eta \ddelta_t^\top \rr_t  + \frac{1}{|\mathcal{B}|} \sum_{i \in \mathcal{B}} (\s_t^{(i)})^\top \bb_t^{(i)}(\w_t) / (\eta\| \ddelta_t\|^2)$ {\color{blue} clipped to [0, 1]} \Comment{Step-size} \label{algo_line:step_size}
    \State $\z_{t+1} =  \mu \z_t - \eta \gamma_t (\rr_t + \ddelta_t)$ \Comment{Velocity accumulation}
    \State $\w_{t+1} = \w_t - \eta \left[\rr_t + \gamma_t \ddelta_t \right] + \mu \z_{t+1}$ \Comment{Parameters update} \label{algo_line:update}
    \State $t=t+1$
    \EndFor
\EndFor
\end{algorithmic}
\end{algorithm}

Note that only the hyper-parameter $\eta$ will be tuned in our experiments: we will use the same batch-size, momentum and number of epochs as the baselines in our experiments (unless specified otherwise).
In addition, we point out again that when $\gamma_t=1$, we recover the SGD step with Nesterov momentum.

In sections \ref{sec:appendix_simplified_fw_algo} and \ref{sec:appendix_smoothing} of the appendix, we detail the derivation of the optimal step-size (step \ref{algo_line:step_size}) and the computation of the search direction (step \ref{algo_line:dual_search}).
The computation of the dual search direction is omitted here for space reasons.
However, its implementation is straightforward in practice, and its computational cost is linear in the size of the output space.

Finally, we emphasize that the DFW algorithm is motivated by an empirical perspective.
While our method is not guaranteed to converge, our experiments show an effective minimization of the learning objective for the problems encountered in practice.

\section{Experiments}

We compare the Deep Frank Wolfe (DFW) algorithm to the state-of-the-art optimizers.
We show that, across diverse data sets and architectures, the DFW algorithm outperforms adaptive gradient methods (with the exception of one setting, DN-10, where it obtains similar performance to AMSGrad and BPGrad).
In addition, the DFW algorithm offers competitive and sometimes superior performance to SGD at a lower computational cost, even though SGD has the advantage of a hand-designed schedule that has been chosen separately for each of these tasks.

Our experiments are implemented in pytorch \citep{Paszke2017}, and the code is available at \url{https://github.com/oval-group/dfw}.
All models are trained on a single Nvidia Titan Xp card.

\subsection{Image Classification with Convolutional Neural Networks}

\paragraph{Data Set \& Architectures.} The CIFAR-10/100 data sets contain 60,000 RGB natural images of size 32 $\times$ 32 with 10/100 classes \citep{Krizhevsky2009}.
We split the training set into 45,000 training samples and 5,000 validation samples, and use 10,000 samples for testing.
The images are centered and normalized per channel.
Unless specified otherwise, no data augmentation is employed.
We perform our experiments on two modern architectures of deep convolutional neural networks: wide residual networks \citep{Zagoruyko2016}, and densely connected convolutional networks \citep{Huang2017a}.
Specifically, we employ a wide residual network of depth 40 and width factor 4, which has 8.9M parameters, and a \say{bottleneck} densely connected convolutional neural network of depth 40 and growth factor 40, which has 1.9M parameters.
We refer to these architectures as WRN and DN respectively.
All the following experimental details follow the protocol of \citep{Zagoruyko2016} and \citep{Huang2017a}.
The only difference is that, instead of using 50,000 samples for training, we use 45,000 samples for training, and 5,000 samples for the validation set, which we found to be essential for all adaptive methods.
While Deep Frank Wolfe (DFW) uses an SVM loss, the baselines are trained with the Cross-Entropy (CE) loss since this resulted in better performance.

\paragraph{Method.} We compare DFW to the most common adaptive learning rates currently used: Adagrad \citep{Duchi2011}, Adam \citep{Kingma2015}, the corrected version of Adam called AMSGrad \citep{Reddi2018}, and BPGrad \citep{Zhang2017a}.
For these methods and for DFW, we cross-validate the initial learning rate as a power of 10.
We also evaluate the performance of SGD with momentum (simply referred to as SGD), for which we follow the protocol of \citep{Zagoruyko2016} and \citep{Huang2017a}.
For all methods, we set a budget of 200 epochs for WRN and 300 epochs for DN.
Furthermore, the batch-size is respectively set to 128 and 64 for WRN and DN as in \citep{Zagoruyko2016} and \citep{Huang2017a}.
For DN, the $l_2$ regularization is set to $10^{-4}$ as in \citep{Huang2017a}.
For WRN, the $l_2$ is cross-validated between $5.10^{-4}$, as in \citep{Zagoruyko2016}, and $10^{-4}$, a more usual value that we have found to perform better for some of the methods (in particular DFW, since the corresponding loss function is an SVM instead of CE, for which the value of $5.10^{-4}$ was designed).
The value of the Nesterov momentum is set to 0.9 for BPGrad, SGD and DFW.
DFW has only one hyper-parameter to tune, namely $\eta$, which is analogous to an initial learning rate.
For SGD, the initial learning rate is set to 0.1 on both WRN and DN.
Following \citep{Zagoruyko2016} and \citep{Huang2017a}, it is then divided by 5 at epochs 60, 120 and 180 for WRN, and by 10 at epochs 150 and 225 for DN.

\paragraph{Results.} We present the results in Table \ref{tab:cifar}.
\begin{table}[h]
\centering
\begin{tabular}{clcc}
\multirow{2}{*}{Architecture} & \multirow{2}{*}{Optimizer} & CIFAR-10           & CIFAR-100  \\
                              &                            & Test Accuracy (\%) & Test Accuracy (\%) \\
\hline \hline
\multirow{6}{*}{WRN}       & Adagrad                    & 86.07 & 57.64    \\
                           & Adam                       & 84.86 & 58.46    \\
                           & AMSGrad                    & 86.08 & 60.73    \\
                           & BPGrad                     & 88.62 & 60.31    \\
                           & DFW                        & {\bf 90.18} & {\bf 67.83}  \\
                           & {\color{red} SGD}          & {\color{red} 90.08}  & {\color{red} 66.78}  \\ \hline

\multirow{6}{*}{DN}         & Adagrad                    & 87.32 & 56.47    \\
                            & Adam                       & 88.44 & 64.61    \\
                            & AMSGrad                    & 90.53 & 68.32    \\
                            & BPGrad                     & {\bf 90.85} & 59.36   \\
                            & DFW                        & 90.22 & {\bf 69.55}   \\
                            & {\color{red} SGD}          & {\color{red} \bf 92.02} & {\color{red} \bf 70.33}    \\
\end{tabular}
\caption{\em Results on the CIFAR data sets without data augmentation.
    In black, all adaptive methods have a single hyper-parameter for their step-size.
    In red, SGD benefits from a hand-designed schedule.
    DFW outperforms all baselines on the WRN architecture, by a margin of 7\% for adaptive gradient methods on CIFAR-100.
    On the DN-100 task, DFW exceeds the accuracy of Adagrad by 14\%.}
\label{tab:cifar}
\end{table}

Observe that DFW significantly outperforms the adaptive gradient methods, particularly on the more challenging CIFAR-100 data set.
On the WRN-CIFAR-100 task in particular, DFW obtains a testing accuracy which is about 7\% higher than all other adaptive methods and outperforms SGD with a hand-designed schedule by 1\%.
The inferior generalization of adaptive gradient methods is consistent with the findings of \cite{Wilson2017}.
On all tasks, the accuracy of DFW is comparable to SGD. Note that DFW converges significantly faster than SGD: the network reaches its final performance several times faster than SGD in all cases.
We illustrate this with an example in figure \ref{fig:error_cifar}, which plots the training and validation errors on DN-CIFAR-100.
In figure \ref{fig:gamma_cifar}, one can see how the step-size is automatically decayed by DFW on this same experiment: we compare the effective step-size $\gamma_t \eta$ for DFW to the manually tuned $\eta_t$ for SGD.

\begin{figure}[h]
\centering
\begin{minipage}{.47\textwidth}
  \centering
  \includegraphics[height=0.65\linewidth]{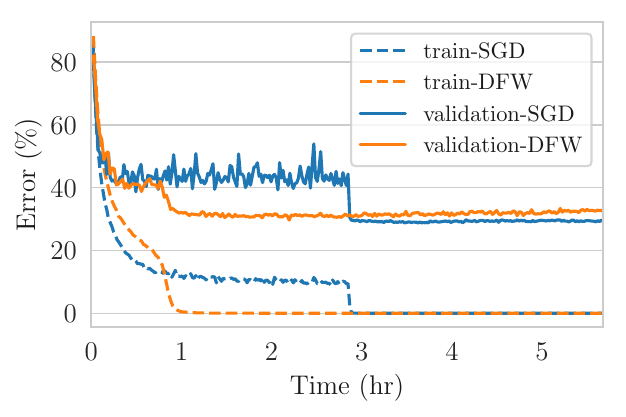}
  \captionof{figure}{\em Training and validation error during the training of DN on CIFAR-100. DFW converges significantly faster than SGD.}
  \label{fig:error_cifar}
\end{minipage}
\hfill
\begin{minipage}{.47\textwidth}
  \centering
  \includegraphics[height=0.65\linewidth]{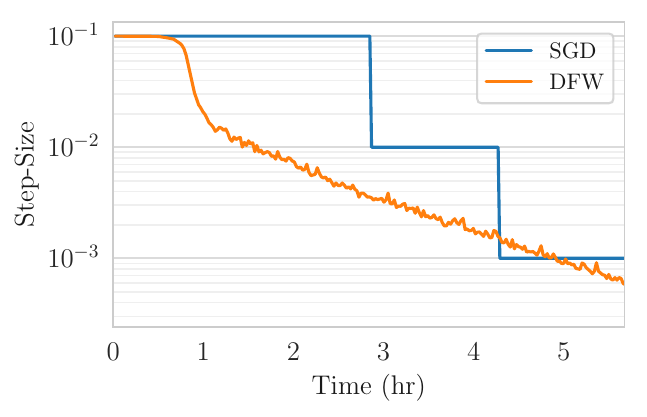}
  \captionof{figure}{\em The (automatic) evolution of $\gamma_t \eta$ for the DFW algorithm compared to the "staircase" hand-designed schedule of $\eta_t$ for SGD.}
  \label{fig:gamma_cifar}
\end{minipage}
\end{figure}

\paragraph{Data Augmentation.}
Since data augmentation provides a significant boost to the final accuracy, we provide additional results that make use of it.
Specifically, we randomly flip the images horizontally and randomly crop them with four pixels padding.
For methods that do not use a hand-designed schedule, such data augmentation introduces additional variance which makes the adaptation of the step-size more difficult.
Therefore we allow the batch size of adaptive methods (e.g. all methods but SGD) to be chosen as 1x, 2x or 4x, where x is the original value of batch-size (64 for DN, 128 for WRN).
Due to the heavy computational cost of the cross-validation (we tune the batch-size, regularization and initial learning rate), we provide results for SGD, DFW and the best performing adaptive gradient method, which is AMSGrad.
For SGD the hyper-parameters are kept the same as in \citep{Zagoruyko2016} and \citep{Huang2017a}.
We present the results in Table \ref{tab:cifar_augmented}.

\begin{table}[H]
\centering
\begin{tabular}{clcc}
\multirow{2}{*}{Architecture} & \multirow{2}{*}{Optimizer} & CIFAR-10                 & CIFAR-100   \\
                              &                            & Test Accuracy (\%)       & Test Accuracy (\%) \\
\hline \hline
\multirow{4}{*}{WRN}                           & AMSGrad                    & 90.06                    & 67.75 \\
                           & DFW                        & {\bf 94.71}              & {\bf 74.71} \\
                           & {\color{red} SGD}          & {\bf \color{red} 95.40}  & {\bf \color{red} 77.78} \\
                           & {\color{red} SGD$^*$}          & {\bf \color{red} 95.47}  & {\bf \color{red} 78.82} \\ \hline

\multirow{3}{*}{DN}        & AMSGrad                    & 91.78                    & 69.58 \\
                            & DFW                        & {\bf 94.88}              & {\bf 73.20} \\
                            & {\color{red} SGD}          & {\bf \color{red} 95.26}  & {\bf \color{red} 76.26}
\end{tabular}
\caption{\em Results on the CIFAR data sets with data augmentation.
    In black, all adaptive methods have a single hyper-parameter for their step-size.
    In red, SGD benefits from a hand-designed schedule.
    On the fourth line, SGD$^*$ refers to the result reported in Table 5 of \citep{Huang2017a}.
    The small difference between the results of SGD and SGD$^*$ can be explained by the fact that we use 5,000 fewer training samples in our experiments (these are kept for validation).
    The results of this table show that DFW systematically outperforms AMSGrad on this task (by up to 7\% on WRN-100).}
\label{tab:cifar_augmented}
\end{table}

These results confirm that DFW consistently outperforms AMSGrad, which is the best adaptive baseline on these tasks.
In particular, DFW obtains a test accuracy which is 7\% better than AMSGrad on WRN-100.

\subsection{Natural Language Inference with Recurrent Neural Networks}

\paragraph{Data Set.} The Stanford Natural Language Inference (SNLI) data set is a large corpus of 570k pairs of sentences \citep{Bowman2015}.
Each sentence is labeled by one of the three possible labels: entailment, neutral and contradiction.
This allows the model to learn the semantics of the text data from a three-way classification problem.
Thanks to its scale and its supervised labels, this data set allows large neural networks to learn high-quality text embeddings.
As \cite{Conneau2017} demonstrate, the SNLI corpus can thus be used as a basis for transfer learning in natural language processing, in the same way that the ImageNet data set is used for pre-training in computer vision.

\paragraph{Method.} We follow the protocol of \citep{Conneau2017} to learn their best model, namely a bi-directional LSTM of about 47M parameters.
In particular, the reported results use SGD with an initial learning rate of 0.1 and a hand-designed schedule that adapts to the variations of the validation set: if the validation accuracy does not improve, the learning rate is divided by a factor of 5.
We also report results on Adagrad, Adam, AMSGrad and BPGrad.
Following the official SGD baseline, Nesterov momentum is deactivated.
Using their open-source implementation, we replace the optimization by the DFW algorithm, the CE loss by an SVM, and leave all other components unchanged.
In this experiment, we use the conditional gradient direction rather than the CE gradient, since three-way classification does not cause sparsity in the derivative of the hinge loss (which is the issue that originally motivated our use of a different direction).
We cross-validate our initial proximal term as a power of ten, and do not manually tune any schedule.
In order to disentangle the importance of the loss function from the optimization algorithm, we run the baselines with both an SVM loss and a CE loss. The initial learning rate of the baselines is also cross-validated as a power of ten.

\paragraph{Results.} The results are presented in Table \ref{tab:slni}.
\begin{table}[H]
\centering
\begin{tabular}{lcccccccc}
Optimizer                           & Loss& Adagrad & Adam & AMSGrad & BPGrad & DFW & {\color{red} SGD} & {\color{red} SGD$^*$}  \\ \hline
\multirow{2}{*}{Test Accuracy (\%)} & CE  & 83.8    & 84.5 & 84.2    &  83.6  &  - & {\color{red} 84.7} & {\color{red} 84.5} \\
                                    & SVM & 84.6    & 85.0 & 85.1    &  84.2  & {\bf 85.2}& {\bf \color{red} 85.2}&  -   \\
\end{tabular}
\caption{\em Results on the Stanford Natural Language Inference corpus.
    In black, all adaptive methods have a single hyper-parameter for their step-size.
    In red, SGD benefits from a hand-designed schedule.
    SGD$^*$ refers to the result reported in \citep{Conneau2017}.
    The other results have been obtained with their open-source implementation in our own experiments.}
\label{tab:slni}
\end{table}

Note that these results outperform the reported testing accuracy of 84.5\% in \citep{Conneau2017} that is obtained with CE.
This experiment, which is performed on a completely different architecture and data set than the previous one, confirms that DFW outperforms adaptive gradient methods and matches the performance of SGD with a hand-designed learning rate schedule.

\section{The Importance of The Step-Size}

\subsection{Impact on Generalization}

It is worth discussing the subtle relationship between optimization and generalization.
In order to emphasize the impact of implicit regularization, all results presented in this section do not use data augmentation.
As a first illustrative example, we consider the following experiment: we take the protocol to train the DN network on CIFAR-100 with SGD, and simply change the initial learning rate to be ten times smaller, and the budget of epochs to be ten times larger.
As a result, the final training objective significantly decreases from 0.33 to 0.069.
Yet at the same time, the best validation accuracy decreases from 70.94\% to 68.7\%.
A similar effect occurs when decreasing the value of the momentum, and we have observed this across various convolutional architectures.
In other words, accurate optimization is less important for generalization than the implicit regularization of a high learning rate.

We have observed DFW to accurately optimize the learning objective in our experiments.
However, given the above observation, we believe that its good generalization properties are rather due to its capability to usually maintain a high learning rate at an early stage.
Similarly, the good generalization performance of SGD may be due to its schedule with a large number of steps at a high learning rate.

\subsection{Sensitivity Analysis}

The previous section has qualitatively hinted at the importance of the step-size for generalization.
Here we quantitatively analyze the impact of the initial learning rate $\eta$ on both the training accuracy (quality of optimization) and the validation accuracy (quality of generalization).
We compare results of the DFW and SGD algorithms on the CIFAR data sets when varying the value of $\eta$ as a power of 10.
The results on the validation set are summarized in figure \ref{fig:sensitivity}, and the performance on the training set is reported in Appendix \ref{sec:details_cifar}.

\begin{figure}[h]
\centering
\begin{minipage}{.5\textwidth}
  \centering
\includegraphics[width=0.8\linewidth]{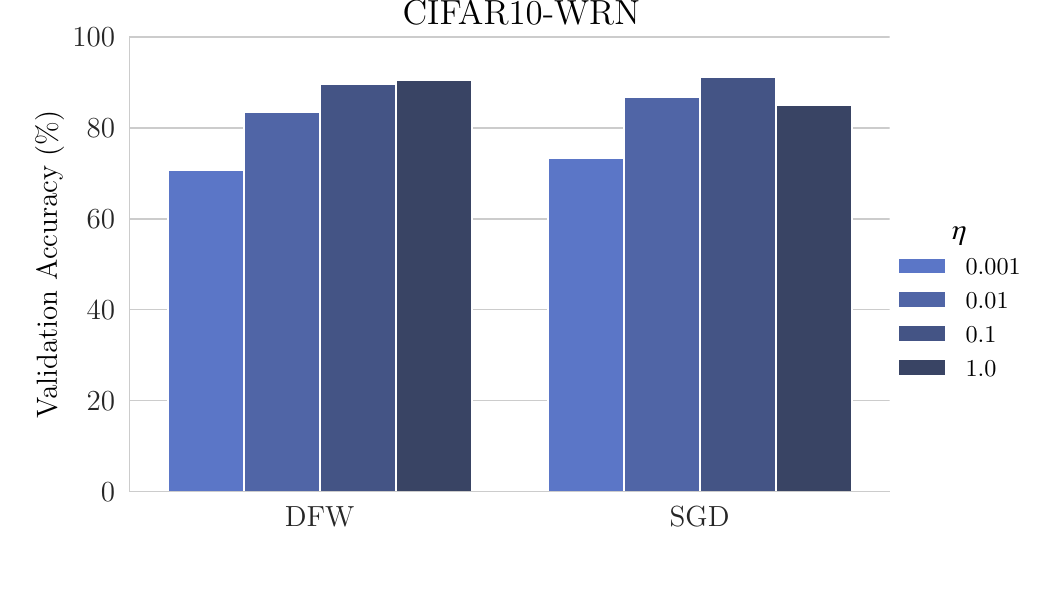}
\includegraphics[width=0.8\linewidth]{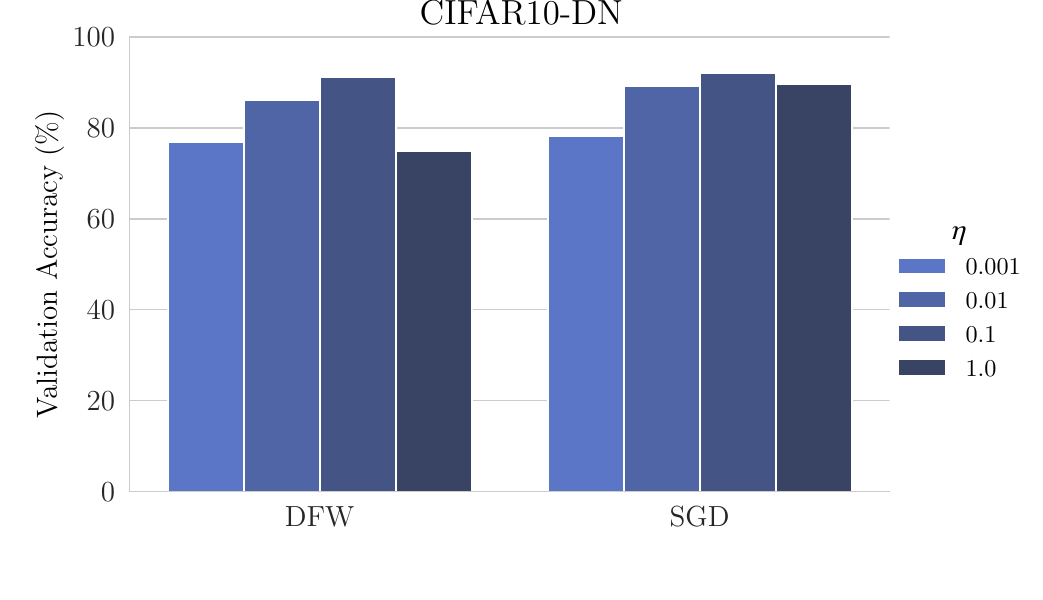}
\end{minipage}%
\hfill
\begin{minipage}{.5\textwidth}
  \centering
  \includegraphics[width=0.8\linewidth]{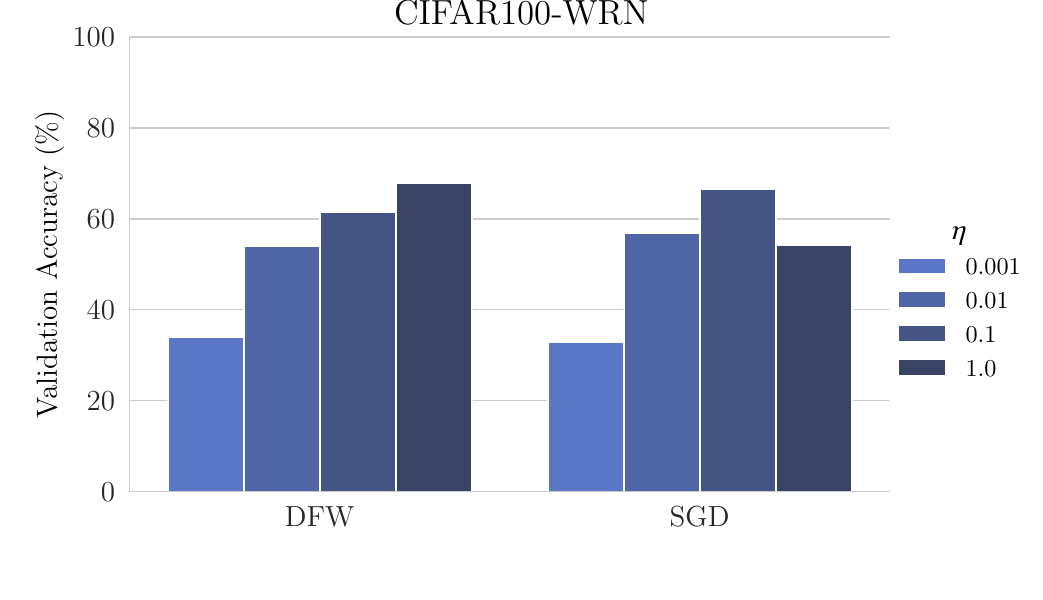}
  \includegraphics[width=0.8\linewidth]{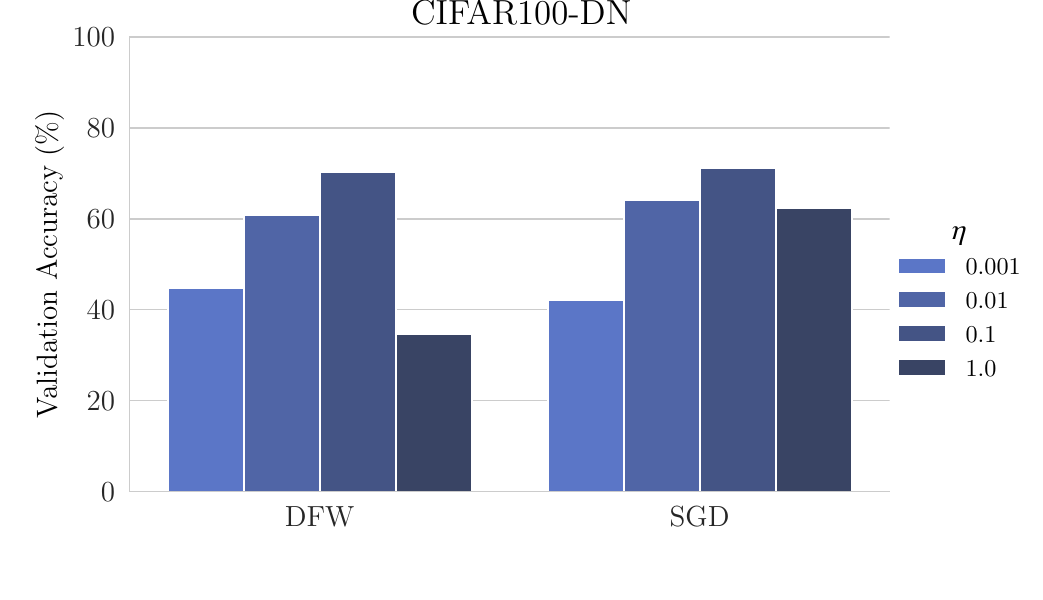}
\end{minipage}
\caption{\em Visualization of the sensitivity analysis for the choice of initial learning rate $\eta$ on the CIFAR data sets.
Each subplot displays the best validation accuracy for DFW and SGD.
Similar plots are available in larger format in Appendix \ref{sec:sensitivity_cifar}.}
\label{fig:sensitivity}
\end{figure}

On the training set, both methods obtain nearly perfect accuracy across at least three orders of magnitude of $\eta$ (details in Appendix \ref{sec:sensitivity_cifar}).
In contrast, the results of figure \ref{fig:sensitivity} confirm that the validation performance is sensitive to the choice of $\eta$ for both methods.

In some cases where $\eta$ is high, SGD obtains a better performance than DFW.
This is because the hand-designed schedule of SGD enforces a decay of $\eta$, while the DFW algorithm relies on an automatic decay of the step-size $\gamma_t$ for effective convergence.
This automatic decay may not happen if a small proximal term (large $\eta$) is combined with a local approximation that is not sufficiently accurate (for instance with a small batch-size).

However, if we allow the DFW algorithm to use a larger batch size, then the local approximation becomes more accurate and it can handle large values of $\eta$ as well.
Interestingly, choosing a larger batch-size and a larger value of $\eta$ can result in better generalization.
For instance, by using a batch-size of 256 (instead of 64) and $\eta=1$, DFW obtains a test accuracy of 72.64\% on CIFAR-100 with the DN architecture (SGD obtains 70.33\% with the settings of \citep{Huang2017a}).

\subsection{Discussion}

Our empirical evidence indicates that the initial learning rate can be a crucial hyper-parameter for good generalization.
We have observed in our experiments that such a choice of high learning rate provides a consistent improvement for convolutional neural networks: accurate minimization of the training objective with large initial steps usually leads to good generalization.
Furthermore, as mentioned in the previous section, it is sometimes beneficial to even increase the batch-size in order to be able to train the model using large initial steps.

In the case of recurrent neural networks, however, this effect is not as distinct.
Additional experiments on different recurrent architectures have showed variations in the impact of the learning rate and in the best-performing optimizer.
Further analysis would be required to understand the effects at play.

\section{Conclusion}
\label{sec:conclusion}

We have introduced DFW, an efficient algorithm to train deep neural networks.
DFW predominantly outperforms adaptive gradient methods, and obtains similar performance to SGD without requiring a hand-designed learning rate schedule.

We emphasize the generality of our framework in Section \ref{sec:problem_formulation}, which enables the training of deep neural networks to benefit from any advance on optimization algorithms for linear SVMs.
This framework could also be applied to other loss functions that yield efficiently solvable proximal problems.
In particular, our algorithm already supports the use of structured prediction loss functions \citep{Taskar2003,Tsochantaridis2004}, which can be used, for instance, for image segmentation.

We have mentioned the intricate relationship between optimization and generalization in deep learning.
This illustrates a major difficulty in the design of effective optimization algorithms for deep neural networks: the learning objective does not include all the regularization needed for good generalization.
We believe that in order to further advance optimization for deep neural networks, it is essential to alleviate this problem and expose a clear objective function to optimize.

\subsubsection*{Acknowledgments}

This work was supported by the EPSRC grants AIMS CDT EP/L015987/1, Seebibyte EP/M013774/1, EP/P020658/1 and TU/B/000048, and by Yougov.
We also thank the Nvidia Corporation for the GPU donation.

\vfill
\pagebreak

\bibliographystyle{iclr2019_conference}
\bibliography{\string~/workspace/bibliography/standardstrings,\string~/workspace/bibliography/oval}
\vfill
\pagebreak

\appendix

\section{Proofs \& Algorithms}

For completeness, we prove results for our specific instance of Structural SVM problem. We point out that the proofs of sections \ref{sec:preliminaries}, \ref{sec:dual_objective} and \ref{sec:optimal_step_size} are adaptations from \citep{Lacoste-Julien2013}. Propositions are numbered according to their appearance in the paper.

\subsection{Preliminaries}
\label{sec:preliminaries}

In this section, we assume the loss $\LL$ to be a hinge loss:
\begin{equation} \label{app:eq:hinge_loss}
\LL_{hinge}: (\uu, y) \in \mathbb{R}^{|\Y|} \times \Y \mapsto \max \left\{ \max_{\ybar \in \Y \backslash \{y\}} \left\{ u_{\ybar} + 1 - u_y \right\}, 0 \right\}
\end{equation}
We suppose that we have received a sample $(\x, y)$. We simplify the notation $\f(\w, \x) = \f_{\x}(\w)$ and $\mathcal{L}(\uu, y) = \mathcal{L}_y(\uu)$. For simplicity of the notation, and without loss of generality, we consider the proximal problem obtained at time $t=0$:
\begin{equation} \label{app:eq:proxfw_step_min}
\min\limits_{\w \in \mathbb{R}^p} \left\{ \dfrac{1}{2 \eta} \|\w - \w_0 \|^2 + \T_{\w_0}\rho(\w) + \LL_y\left(\T_{\w_0}\f_{\x}(\w)\right) \right\}.
\end{equation}

Let us define the classification task loss:
\begin{equation}
\text{For } (\bar{y}, y) \in \Y^2, \Delta(\bar{y}, y) =
\begin{cases}
    0& \text{if } \bar{y} = y,\\
    1              & \text{otherwise}.
\end{cases}
\end{equation}

Using this notation, the multi-class hinge loss can be written as:
\begin{equation}
\LL_{hinge}(\uu, y)= \max_{\ybar \in \Y} \left\{ u_{\ybar} + \Delta(\ybar, y) - u_y \right\}.
\end{equation}
Indeed, we can successively write:
\begin{equation}
\begin{split}
\LL_{hinge}(\uu, y)
    &= \max \left\{ \max_{\ybar \in \Y \backslash \{y\}} \left\{ u_{\ybar} + 1 - u_y \right\}, 0 \right\}, \\
    &= \max_{\ybar \in \Y \backslash \{y\}} \left\{ \max \left\{ u_{\ybar} + 1 - u_y, 0  \right\} \right\}, \\
    &= \max_{\ybar \in \Y \backslash \{y\}} \left\{ \max \left\{ u_{\ybar} + \Delta(\ybar, y) - u_y, 0  \right\} \right\}, \\
    &= \max_{\ybar \in \Y} \left\{ \max \left\{ u_{\ybar} + \Delta(\ybar, y) - u_y, 0  \right\} \right\}, \\
    &= \max_{\ybar \in \Y} \left\{ u_{\ybar} + \Delta(\ybar, y) - u_y \right\}.
\end{split}
\end{equation}

We are now going to re-write problem (\ref{app:eq:proxfw_step_min}) as the sum of a quadratic term and a pointwise maximum of linear functions.
For $\ybar \in \Y$, let us define:
\begin{equation}
\begin{split}
\mathbf{a}_{\ybar}
    &= \evalat{\partial \rho(\w)}{\w_0} + \evalat{\partial f_{\x, \ybar}(\w)}{\w_0} - \evalat{\partial f_{\x, y}(\w)}{\w_0}, \\
b_{\ybar}
    &= \rho(\w_0) + f_{\x, \ybar}(\w_0) - f_{\x, y}(\w_0) + \Delta(\ybar, y).
\end{split}
\end{equation}
Then we have that:
\begin{equation}
\begin{split}
\max\limits_{\ybar \in \Y} \left\{ \mathbf{a}_{\ybar}^\top (\w - \w_0) + b_{\ybar} \right\}
    &= \max\limits_{\ybar \in \Y} \Big\{ (\evalat{\partial \rho(\w)}{\w_0} + \evalat{\partial f_{\x, \ybar}(\w)}{\w_0} - \evalat{\partial f_{\x, y}(\w)}{\w_0})^\top (\w - \w_0) \\
    &\qquad + \rho(\w_0) + f_{\x, \ybar}(\w_0) - f_{\x, y}(\w_0) + \Delta(\ybar, y) \Big\}, \\
    &= \rho(\w_0) + \evalat{\partial \rho(\w)}{\w_0}^\top(\w - \w_0) \\
    &\qquad + \max\limits_{\ybar \in \Y} \left\{ \evalat{\partial f_{\x, \ybar}(\w)}{\w_0}^\top (\w - \w_0) +  f_{\x, \ybar}(\w_0) + \Delta(\ybar, y) \right\} \\
    &\qquad - f_{\x, y}(\w_0) - \evalat{\partial f_{\x, y}(\w)}{\w_0}^\top (\w - \w_0), \\
    &= \T_{\w_0}\rho(\w) + \LL\left(\T_{\w_0}\f_\x(\w), y \right).
\end{split}
\end{equation}
Therefore, problem (\ref{app:eq:proxfw_step_min}) can be written as:
\begin{equation} \label{app:eq:canonical_form}
\min\limits_{\w \in \mathbb{R}^p} \left\{ \dfrac{1}{2 \eta} \|\w - \w_0 \|^2 + \max\limits_{\ybar \in \Y} \left\{ \mathbf{a}_{\ybar}^\top (\w - \w_0) + b_{\ybar} \right\} \right\}.
\end{equation}

We notice that the term $\rho(\w_0)$ in $\bb$ is a constant that does not depend on $\w$ nor $\ybar$, therefore we can simplify the expression of $\bb$ to:
\begin{equation}
b_{\ybar}
    = f_{\x, \ybar}(\w_0) - f_{\x, y}(\w_0) + \Delta(\ybar, y).
\end{equation}

We introduce the following notation:
\begin{align}
\what &= \w - \w_0, \\
\PP &= \{ \aalpha \in \mathbb{R}_+^{|\Y|} : \: \sum\limits_{\ybar \in \Y} \alpha_{\ybar} = 1 \}, \\
A &= (\eta \mathbf{a}_{\ybar})_{\ybar \in \Y} \in \mathbb{R}^{p \times |\Y|}.
\end{align}

We will also use the indicator vector: $\mathds{1}_y \in \mathbb{R}^{|\Y|}$, which is equal to 1 at index $y$ and 0 elsewhere.

\subsection{Dual Objective}
\label{sec:dual_objective}

\begin{lemma}[Dual Objective]
The Lagrangian dual of (\ref{app:eq:proxfw_step_min}) is given by:
\begin{equation}
\max\limits_{\aalpha \in \PP} \left\{ -\dfrac{1}{2 \eta} \| A \aalpha\|^2 + \bb^\top \aalpha \right\}.
\end{equation}
Given the dual variables $\aalpha$, the primal can be computed as $\what = - A \aalpha$.
\end{lemma}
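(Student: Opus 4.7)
The plan is to pass to the Lagrangian dual by first reformulating the primal as a smooth quadratic program via an epigraph reformulation, and then eliminating the primal variables in closed form. Using the change of variable $\what = \w - \w_0$ from Section~\ref{sec:preliminaries}, problem (\ref{app:eq:canonical_form}) is equivalent to
\begin{equation}
\min_{\what \in \mathbb{R}^p,\ \xi \in \mathbb{R}} \left\{ \frac{1}{2\eta}\|\what\|^2 + \xi \right\} \quad \text{subject to} \quad \mathbf{a}_{\ybar}^\top \what + b_{\ybar} \leq \xi \ \forall\, \ybar \in \Y,
\end{equation}
which makes the problem a strictly convex QP with linear inequality constraints, so strong duality holds and the KKT conditions are sufficient for optimality.

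Next I would introduce nonnegative multipliers $\alpha_{\ybar} \geq 0$ for each constraint and write the Lagrangian
\begin{equation}
\mathcal{L}(\what, \xi, \aalpha) = \frac{1}{2\eta}\|\what\|^2 + \xi + \sum_{\ybar \in \Y} \alpha_{\ybar} \bigl( \mathbf{a}_{\ybar}^\top \what + b_{\ybar} - \xi \bigr).
\end{equation}
Minimizing over $\xi$ (which is unconstrained) gives the dual feasibility condition $\sum_{\ybar} \alpha_{\ybar} = 1$, otherwise the infimum is $-\infty$; combined with $\alpha_{\ybar} \geq 0$ this forces $\aalpha \in \PP$. Minimizing over $\what$ by setting the gradient to zero yields
\begin{equation}
\what = -\eta \sum_{\ybar \in \Y} \alpha_{\ybar} \mathbf{a}_{\ybar} = -A \aalpha,
\end{equation}
which simultaneously establishes the stated primal-dual relationship $\what = -A\aalpha$.

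Finally I would substitute these two expressions back into $\mathcal{L}$. The quadratic term becomes $\frac{1}{2\eta}\|A\aalpha\|^2$, the cross term $\sum_{\ybar} \alpha_{\ybar} \mathbf{a}_{\ybar}^\top \what = -\frac{1}{\eta}\|A\aalpha\|^2$, and the $\xi$-terms cancel by the constraint $\sum_{\ybar} \alpha_{\ybar} = 1$. What remains is
\begin{equation}
-\frac{1}{2\eta}\|A\aalpha\|^2 + \bb^\top \aalpha,
\end{equation}
to be maximized over $\aalpha \in \PP$, which is exactly the claimed dual. The only mild subtlety is bookkeeping the factor of $\eta$ in the definition $A = (\eta \mathbf{a}_{\ybar})_{\ybar}$ so that the coefficient of the quadratic comes out as $-1/(2\eta)$ rather than $-\eta/2$; otherwise the proof is a routine Lagrangian calculation and no hard step is anticipated.
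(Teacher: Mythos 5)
Your proposal is correct and follows essentially the same route as the paper's proof: epigraph reformulation into a QP, Lagrangian with simplex-constrained multipliers arising from stationarity in $\xi$, elimination of $\what$ via $\what = -A\aalpha$, and back-substitution, with the factor of $\eta$ in $A = (\eta \mathbf{a}_{\ybar})_{\ybar}$ handled correctly. No gaps.
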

\begin{proof}
We derive the Lagrangian of the primal problem. For that, we write the problem in the following equivalent ways:
\begin{align}
&\min\limits_{\what \in \mathbb{R}^p} \left\{ \dfrac{1}{2 \eta} \|\what \|^2 + \max\limits_{\ybar \in \Y} \left\{ \mathbf{a}_{\ybar}^\top \what + b_{\ybar} \right\} \right\}, \\
&\min\limits_{\substack{\what \in \mathbb{R}^p \\ \xi \in \mathbb{R}}} \left\{ \dfrac{1}{2 \eta} \|\what \|^2 + \xi \right\} \: \text{subject to: } \forall \ybar \in \Y, \: \mathbf{a}_{\ybar}^\top \what + b_{\ybar} \leq \xi, \\
&\min\limits_{\substack{\what \in \mathbb{R}^p \\ \xi \in \mathbb{R}}} \sup\limits_{\aalpha \geq 0} \left\{ \dfrac{1}{2 \eta} \|\what \|^2 + \xi + \sum\limits_{\ybar \in \Y} \alpha_{\ybar} \left( \mathbf{a}_{\ybar}^\top \what + b_{\ybar}  - \xi\right) \right\}, \\
&\sup\limits_{\aalpha \geq 0} \min\limits_{\substack{\what \in \mathbb{R}^p \\ \xi \in \mathbb{R}}} \underbrace{\left\{ \dfrac{1}{2 \eta} \|\what \|^2 + \xi + \sum\limits_{\ybar \in \Y} \alpha_{\ybar} \left( \mathbf{a}_{\ybar}^\top \what + b_{\ybar} - \xi \right) \right\}}_{\Lambda(\what, \xi, \aalpha)} \quad \text{(by strong duality)}. \label{app:eq:dual_lagrangian}
\end{align}
We can now write the KKT conditions of the inner minimization problem:
\begin{equation}
\begin{split}
&\dfrac{\partial \Lambda(\what, \xi, \aalpha)}{\partial \xi} = 0: \quad 1 -  \sum\limits_{\ybar \in \Y} \alpha_{\ybar} = 0, \\
&\dfrac{\partial \Lambda(\what, \xi, \aalpha)}{\partial \what} = \mathbf{0}: \quad \frac{1}{\eta}\what + \sum\limits_{\ybar \in \Y} \alpha_{\ybar} \mathbf{a}_{\ybar} = \mathbf{0}.
\end{split}
\end{equation}
This gives $\aalpha \in \PP$ and $\what = - A \aalpha$, since $A = (\eta \mathbf{a}_{\ybar})_{\ybar \in \Y}$ by definition.
By injecting these constraints in $(\ref{app:eq:dual_lagrangian})$, we obtain:
\begin{equation}
\max\limits_{\aalpha \in \PP} \dfrac{1}{2 \eta} \| A \aalpha\|^2 + - A \aalpha^\top \frac{1}{\eta} A \aalpha + \bb^\top \aalpha,
\end{equation}
which finally gives the desired result.
\end{proof}

\subsection{Derivation of the Optimal Step-Size}
\label{sec:optimal_step_size}

\begin{lemma}[Optimal Step-Size]
Suppose that we make a step in the direction of $\s \in \PP$ in the dual.
We define the corresponding primal variables $\w_\s = -A \s$ and $\lambda_\s = \bb^\top \s$, as well as $\lambda = \bb^\top \aalpha$.
Then the optimal step-size is given by:
\begin{equation}
\gamma
    = \dfrac{ (\w - \w_0 - \w_\s)^\top (\w - \w_0)  + \eta (\lambda_\s - \lambda)}{ \|\w - \w_0 - \w_\s\|^2}.
\end{equation}

\end{lemma}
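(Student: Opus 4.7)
The plan is to reduce to a one-dimensional quadratic optimization along the Frank-Wolfe line segment $\aalpha(\gamma) = (1-\gamma)\aalpha + \gamma\s$, and then simply set the derivative to zero. Since the dual objective derived in the previous lemma,
\[
g(\aalpha) = -\dfrac{1}{2\eta}\|A\aalpha\|^2 + \bb^\top \aalpha,
\]
is concave quadratic in $\aalpha$, its restriction to any line is a concave quadratic in $\gamma$, so the unconstrained maximizer has a simple closed form.

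First I would translate everything to primal quantities using the duality map $\what = -A\aalpha$ from the preceding lemma. This gives $A\aalpha = -(\w-\w_0)$ and $A\s = -\w_\s$, hence
\[
A\aalpha(\gamma) = -\bigl((1-\gamma)(\w-\w_0) + \gamma\w_\s\bigr),
\]
while linearity of $\bb^\top\cdot$ gives $\bb^\top\aalpha(\gamma) = (1-\gamma)\lambda + \gamma\lambda_\s$. Substituting into $g$ yields
\[
g(\gamma) = -\dfrac{1}{2\eta}\bigl\|(1-\gamma)(\w-\w_0) + \gamma\w_\s\bigr\|^2 + (1-\gamma)\lambda + \gamma\lambda_\s.
\]

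Next I would differentiate. Writing $\dd = (\w - \w_0) - \w_\s$ and $\vv(\gamma) = (\w-\w_0) - \gamma\,\dd$, the squared-norm term becomes $\|\vv(\gamma)\|^2$ whose derivative with respect to $\gamma$ is $-2\,\vv(\gamma)^\top \dd$. Setting $g'(\gamma) = 0$ gives
\[
\dfrac{1}{\eta}\,\vv(\gamma)^\top \dd + (\lambda_\s - \lambda) = 0,
\]
and expanding $\vv(\gamma)^\top\dd = (\w-\w_0)^\top\dd - \gamma\|\dd\|^2$ yields the stated formula after a single linear solve for $\gamma$.

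The only steps that require care are sign bookkeeping (the $\what = -A\aalpha$ map introduces a flip that is easy to mishandle) and identifying $(\w - \w_0 - \w_\s)^\top(\w - \w_0)$ in the numerator with $(\w-\w_0)^\top\dd$. I would also remark that concavity of $g$ in $\aalpha$ guarantees the critical point is a maximum, and that to obtain the algorithmic step-size one additionally clips $\gamma$ to $[0,1]$ (as reflected in line~\ref{algo_line:step_size} of Algorithm~\ref{algo:dfw}); the interior formula itself is what the lemma asserts. There is no substantive obstacle beyond these routine manipulations, since all the structural work was done in establishing the dual form in the previous lemma.
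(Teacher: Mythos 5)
Your proposal is correct and follows essentially the same route as the paper: both restrict the concave quadratic dual objective to the Frank--Wolfe line segment $\aalpha + \gamma(\s - \aalpha)$, set the derivative in $\gamma$ to zero, and translate back to primal quantities via $\what = -A\aalpha$; whether the substitution happens before or after differentiating is immaterial. Your added remarks on concavity and on clipping to $[0,1]$ are consistent with how the paper uses the result in Algorithm~\ref{algo:dfw}.
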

\begin{proof}
Given the direction $\s$, we take the step $\aalpha + \gamma (\s - \aalpha)$. The new objective is given by:
\begin{equation}
-\dfrac{1}{2 \eta} \| A (\aalpha + \gamma (\s - \aalpha))\|^2 + \bb^\top (\aalpha + \gamma (\s - \aalpha)).
\end{equation}
In order to compute the optimal step-size, we compute the derivative of the above expression with respect to gamma, and set it to 0:
\begin{equation}
-\dfrac{1}{\eta}(\s - \aalpha)^\top A ^\top A (\aalpha + \gamma (\s - \aalpha)) + \bb^\top (\s - \aalpha) = 0.
\end{equation}
We can isolate the unique term containing $\gamma$:
\begin{equation}
-\dfrac{1}{\eta} \gamma \|A(\s - \aalpha)\|^2 -\dfrac{1}{\eta}(\s - \aalpha)^\top A ^\top A \aalpha  + \bb^\top (\s - \aalpha) = 0.
\end{equation}
This yields:
\begin{equation}
\begin{split}
\gamma
    &= \dfrac{-\frac{1}{\eta}(\s - \aalpha)^\top A ^\top A \aalpha  + \bb^\top (\s - \aalpha)}{\frac{1}{\eta} \|A(\s - \aalpha)\|^2}, \\
    &= \dfrac{-\frac{1}{\eta}(A \s - A \aalpha)^\top A \aalpha  + \bb^\top (\s - \aalpha)}{\frac{1}{\eta} \|A \s - A \aalpha\|^2}, \\
    &= \dfrac{-(A \s - A \aalpha)^\top A \aalpha  + \eta \bb^\top (\s - \aalpha)}{ \|A \s - A \aalpha\|^2}.
\end{split}
\end{equation}
We can then inject the primal variables and simplify:
\begin{equation}
\begin{split}
\gamma
    &= \dfrac{(-\w_\s + \what)^\top \what  + \eta (\lambda_{\s} - \lambda)}{\|-\w_\s + \what\|^2}, \\
    &= \dfrac{(\w - \w_0 - \w_\s)^\top (\w - \w_0)  + \eta (\lambda_\s - \lambda)}{ \|\w - \w_0 - \w_\s\|^2}.
\end{split}
\end{equation}
\end{proof}

\subsection{Primal-Dual Proximal Frank-Wolfe Algorithm}
\label{sec:appendix_fw_algo}

We present here the primal-dual algorithm that solves $(\ref{app:eq:proxfw_step_min})$ using the previous results:
\begin{algorithm}[H]
\caption{\em Proximal Frank Wolfe Algorithm}\label{algo:proxfw}
\begin{algorithmic}[1]
\Require proximal coefficient $\eta$, initial point $\w_0 \in \mathbb{R}^p$, sample $(\x, y)$.
\State $\w_1 = \w_0 - \eta \evalat{\partial \rho (\w)}{\w_0}$ \Comment{Initialization $\w_0 - A \aalpha$ with $\aalpha = \bm{1}_y$}
\State $\lambda_1 = 0$ \Comment{Initialization $\bb^\top \aalpha$ with $\aalpha = \bm{1}_y$}
\State $t = 1$
\While {not converged}
    \State Choose direction $\s_t \in \PP$ \Comment{(e.g. conditional gradient or smoothed loss)} \label{algo_line:dual_direction}
    \State $\w_\s = -A \s_t$ \label{algo_line:primal_direction}
    \State $\lambda_\s = \bb^\top \s_t$
    \State $\gamma_t = \dfrac{ (\w_t - \w_0 - \w_\s)^\top (\w_t - \w_0)  + \eta (\lambda_\s - \lambda_t)}{ \|\w - \w_0 - \w_\s\|^2}$ \Comment{Optimal- step-size} \label{algo_line:step_size_pfw}
    \State $\w_{t+1} = (1 - \gamma_t) \w_t + \gamma_t (\w_\s + \w_0)$ \Comment{$A \aalpha_{t+1} = (1 - \gamma_t) A \aalpha_t + \gamma_t A\s_t$} \label{algo_line:update_pfw}
    \State $\lambda_{t+1} = (1 - \gamma_t) \lambda_t + \gamma_t \lambda_\s$ \Comment{$\bb^\top \aalpha_{t+1} = (1 - \gamma_t) \bb^\top\aalpha_t + \gamma_t \bb^\top\s_t$}
    \State $t = t+1$
\EndWhile
\end{algorithmic}
\end{algorithm}

Note that when $\f_\x$ is linear, and when the search direction $\s$ is given by the conditional gradient, we recover the standard Frank-Wolfe algorithm for SVM \citep{Lacoste-Julien2013}.

\subsection{Single-Step Proximal Frank-Wolfe Algorithm}
\label{sec:appendix_simplified_fw_algo}

We now provide some simplification to the steps \ref{algo_line:primal_direction}, \ref{algo_line:step_size_pfw} and \ref{algo_line:update_pfw} of Algorithm \ref{algo:proxfw} when a single step is taken, as is the case in the DFW algorithm. This corresponds to the iteration $t=1$.

\begin{customthm}{2}[Cost per iteration, detailed]
Suppose that a single step is performed on the dual of (\ref{eq:proxfw_step_min}).
If $\rho = 0$, its conditional gradient is exactly given by $-\evalat{\partial \LL_{y}(\f_\x(\w))}{\w_t}$.
If $\rho \neq 0$, the direction $-\evalat{\partial \left(\rho(\w) + \LL_{y}(\f_\x(\w)) \right)}{\w_t}$ is feasible and is typically a close approximation of the exact conditional gradient. 
The resulting update can be written as:
\begin{equation} \label{eq:primal_update_appendix}
\w_{t+1} = \w_t - \eta \left[\evalat{\partial \rho(\w)}{\w_t} + \gamma \evalat{\partial \LL_j(\f_j(\w))}{\w_t} \right]
\end{equation}
\end{customthm}
\begin{proof}
We begin by noting that $\evalat{\partial \left(\rho(\w) + \LL_{y}(\f_\x(\w)) \right)}{\w_t}$ always corresponds to a feasible direction in the dual, since it is equal to $A s_\star$, where $s_\star$ is the one-hot encoding of $y_\star = \argmax_{\ybar \in \Y} b_{\ybar}$.

When $\rho = 0$, we have that initially $A \alpha_0 = 0$, and thus $s_\star$ corresponds exactly to the dual conditional gradient.

When $\rho \neq 0$, the exact dual conditional gradient is given by a one-hot-encoding of $\argmax\limits_{\ybar \in \Y} \{ \mathbf{a}_{\ybar}^\top (\w_1 - \w_0) + b_{\ybar} \}$ instead of $s_\star$.
Typically, we have that $\w_1 \simeq \w_0$ because $\eta \evalat{\partial \rho (\w)}{\w_0}$ is usually small in comparison to $\w_0$.
Therefore the direction can be considered to be a close approximation.

We now prove equation (\ref{eq:primal_update_appendix}) in the next lemma.
\end{proof}

\begin{lemma}
Suppose that we apply the Proximal Frank-Wolfe algorithm with a single step.
Let $\ddelta_t = \partial  \left[\s_t^\top  (f_{\x, \ybar}(\w_0) - f_{\x, y}(\w_0))_{\ybar \in \Y }\right]$ and $\rr_t = \partial_w \rho(\w_0)$.
Then we can rewrite step \ref{algo_line:primal_direction} as:
\begin{align}
\w_s
    &= -\eta \left[\rr_t + \ddelta_t \right]. \label{app:eq:direction_simplified}
\end{align}
In addition, we can simplify steps \ref{algo_line:step_size_pfw} and \ref{algo_line:update_pfw} of Algorithm \ref{algo:proxfw} to:
\begin{align}
\gamma_t
    &= \frac{ -\eta \ddelta_t ^\top \rr_t  + \s_t^\top \bb}{ \eta\| \ddelta_t\|^2} \text{ clipped to [0, 1]}, \label{app:eq:step_size_simplified} \\
\w_{t+1}
    &= \w_0 - \eta \left[\rr_t + \gamma_t \ddelta_t \right]. \label{app:eq:step_simplified}
\end{align}
\end{lemma}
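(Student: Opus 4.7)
The plan is to derive the three simplified expressions by directly substituting the Algorithm~\ref{algo:proxfw} definitions into the primal-dual correspondence at iteration $t=1$, exploiting the simplex constraint $\s_t \in \PP$. Since this is a ``single step'' statement, the key observation is that the initialization in Algorithm~\ref{algo:proxfw} gives a very clean starting state: $\w_1 = \w_0 - \eta\,\rr_t$ and $\lambda_1 = 0$, so every quantity appearing in Lemma~3 collapses to a simple expression in $\rr_t$, $\ddelta_t$, and $\bb^\top \s_t$.

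First I would prove \eqref{app:eq:direction_simplified}. By definition $\w_\s = -A\s_t = -\eta \sum_{\ybar \in \Y} s_{t,\ybar}\, \mathbf{a}_{\ybar}$, and by the preliminaries
\[
\mathbf{a}_{\ybar} = \evalat{\partial \rho(\w)}{\w_0} + \evalat{\partial f_{\x,\ybar}(\w)}{\w_0} - \evalat{\partial f_{\x,y}(\w)}{\w_0}.
\]
Splitting the sum and using $\sum_{\ybar} s_{t,\ybar} = 1$ (since $\s_t \in \PP$) factors out $\rr_t$, while the remaining term is exactly the derivative $\ddelta_t = \partial\bigl[\s_t^\top(f_{\x,\ybar}(\w_0) - f_{\x,y}(\w_0))_{\ybar}\bigr]$ pushed through the chain rule. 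This gives $\w_\s = -\eta[\rr_t + \ddelta_t]$.

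Next I would substitute into the optimal step-size formula of Lemma~3. At $t=1$, I have $\w_1 - \w_0 = -\eta\,\rr_t$ from the initialization, and hence
\[
\w_1 - \w_0 - \w_\s = -\eta\,\rr_t + \eta[\rr_t + \ddelta_t] = \eta\,\ddelta_t.
\]
The numerator $(\w_1 - \w_0 - \w_\s)^\top (\w_1 - \w_0) + \eta(\lambda_\s - \lambda_1)$ becomes $-\eta^2 \ddelta_t^\top \rr_t + \eta\,\s_t^\top \bb$ (using $\lambda_\s = \bb^\top \s_t$ and $\lambda_1 = 0$), while the denominator is $\eta^2 \|\ddelta_t\|^2$. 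Dividing and then projecting onto $[0,1]$ (which is the standard FW truncation guaranteeing the new dual iterate remains in $\PP$) yields exactly \eqref{app:eq:step_size_simplified}.

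Finally I would derive \eqref{app:eq:step_simplified} from the convex-combination update $\w_{t+1} = (1-\gamma_t)\w_1 + \gamma_t(\w_\s + \w_0)$. Substituting $\w_1 = \w_0 - \eta \rr_t$ and $\w_\s = -\eta[\rr_t + \ddelta_t]$, the $(1-\gamma_t)\eta\rr_t$ and $\gamma_t\eta\rr_t$ terms merge, leaving $\w_{t+1} = \w_0 - \eta[\rr_t + \gamma_t \ddelta_t]$. There is no real obstacle here — the proof is a bookkeeping exercise; the only subtlety worth flagging is the use of $\sum_{\ybar} s_{t,\ybar} = 1$ to cleanly decouple the regularization gradient from the model-linearization gradient in the expression for $\w_\s$, which is what makes the final update take the SGD-like form with $\gamma_t$ modulating only the loss-direction component.
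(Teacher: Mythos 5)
Your proposal is correct and follows essentially the same route as the paper's own proof: expand $\w_\s = -A\s_t$ using the definition of $\mathbf{a}_{\ybar}$ (the paper leaves the $\sum_{\ybar} s_{t,\ybar}=1$ step implicit, which you rightly make explicit), use the initialization $\w_1-\w_0=-\eta\rr_t$ and $\lambda_1=0$ to collapse the step-size formula, and substitute into the convex-combination update. No gaps.
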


\begin{proof}
Again, since we perform a single step of FW, we assume $t=1$. To prove equation (\ref{app:eq:direction_simplified}), we note that:
\begin{equation}
\begin{split}
\w_s
    &= - A \s, \\
    &= - \eta \left[ \evalat{\partial \rho(\w)}{\w_0} + \left((\evalat{\partial f_{\x, \ybar}(\w)}{\w_0} - \evalat{\partial f_{\x, y}(\w)}{\w_0} )_{\ybar \in \Y } \right)^\top \: \s_t \right], \\
    &= - \eta \left[ \rr_t + \ddelta_t\right].
\end{split}
\end{equation}
We point out the two following results:
\begin{equation}
\w_t - \w_0 = \w_1 - \w_0 = - \eta \evalat{\partial \rho(\w)}{\w_0} = -\eta \rr_t, \label{app:eq:simplification_1}
\end{equation}
and:
\begin{equation}
\w_t - \w_0 - \w_s = -\eta \rr_t + \eta \rr_t + \eta \ddelta_t = \eta \ddelta_t. \label{app:eq:simplification_2}
\end{equation}
Since $\lambda_1=0$ by definition, equation (\ref{app:eq:step_size_simplified}) is obtained with a simple application of equations \ref{app:eq:simplification_1} and \ref{app:eq:simplification_2}.
Finally, we prove equation \ref{app:eq:step_simplified} by writing:
\begin{equation} \label{app:eq:update_simplified_proof}
\begin{split}
\w_{t+1}
    &= (1 - \gamma_t) \w_t + \gamma_t (\w_\s + \w_0), \\
    &= (1 - \gamma_t) (\w_0 - \eta \rr_t) + \gamma_t (-\eta \rr_t - \eta \ddelta_t + \w_0), \\
    &= \w_0 - \eta (\rr_t + \gamma_t  \ddelta_t). \\
\end{split}
\end{equation}

\end{proof}

\subsection{Smoothing the Loss}
\label{sec:appendix_smoothing}

As pointed out in the paper, the SVM loss is non-smooth and has sparse derivatives, which can prevent the effective training of deep neural networks \citep{Berrada2018}.
Partial linearization can solve this problem by locally smoothing the dual \citep{Mohapatra2016}.
However, this would introduce a temperature hyper-parameter which is undesirable.
Therefore, we note that DFW can be applied with any direction that is feasible in the dual, since it computes an optimal step-size.
In particular, the following result states that we can use the well-conditioned and non-sparse gradient of cross-entropy.
\begin{proposition} \label{prop:ce_direction}
The gradient of cross-entropy in the primal gives a feasible direction in the dual. Furthermore, we can inexpensively detect when this feasible direction cannot provide any improvement in the dual, and automatically switch to the conditional gradient when that is the case.
\end{proposition}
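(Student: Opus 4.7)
The plan is to split the proposition into two claims and prove each separately, both relying on the machinery developed in Appendix \ref{sec:appendix_simplified_fw_algo}.

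For the feasibility claim, I would set $\s = \softmax(\f_\x(\w_0))$. Immediately $s_{\bar y} \geq 0$ and $\sum_{\bar y} s_{\bar y} = 1$, so $\s \in \PP$. Substituting this choice into the expression for $\ddelta_t$ given in Appendix \ref{sec:appendix_simplified_fw_algo}, and using $\sum_{\bar y} s_{\bar y} = 1$ to factor out the shared $\partial f_{\x, y}$ term, yields
\begin{equation*}
\ddelta_t = \sum_{\bar y \in \Y} s_{\bar y} \, \evalat{\partial f_{\x, \bar y}(\w)}{\w_0} - \evalat{\partial f_{\x, y}(\w)}{\w_0},
\end{equation*}
which is precisely the chain-rule expansion of $\evalat{\partial \LL_{CE}(\f_\x(\w), y)}{\w_0}$. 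Combined with equation (\ref{app:eq:direction_simplified}), this shows that this simplex point corresponds exactly to following the cross-entropy gradient in the primal, so the cross-entropy direction is indeed a feasible dual direction.

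For the detection claim, I would exploit the closed-form step-size (\ref{app:eq:step_size_simplified}) that DFW already computes. Because $\gamma_t$ is clipped to $[0,1]$, a feasible direction $\s$ fails to improve the dual objective precisely when the unclipped numerator $-\eta \, \ddelta_t^\top \rr_t + \s^\top \bb$ is non-positive. This inner-product test is $O(|\Y| + p)$ and is part of the same computation that produces $\gamma_t$, so the detection costs essentially nothing. When the test fails, the algorithm substitutes the conditional gradient direction, i.e.\ the simplex vertex $\mathds{1}_{\bar y^\star}$ with $\bar y^\star = \argmax_{\bar y \in \Y} b_{\bar y}$; by Proposition \ref{prop:cond_gradient_primal} this amounts to a hinge-loss subgradient step and requires only a single backward pass.

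The main technical subtlety lies in the first part: one must verify that no regularization or label cross-term spoils the identification of $\ddelta_t$ with the cross-entropy gradient. This goes through cleanly because Appendix \ref{sec:appendix_simplified_fw_algo} already separates the $\s$-independent regularization piece $\rr_t$ from the $\s$-linear piece $\ddelta_t$, and because the normalization $\sum_{\bar y} s_{\bar y} = 1$ inherited from $\s \in \PP$ is exactly the identity needed for the softmax probabilities to combine with the per-label Jacobians and reproduce the chain rule for $\LL_{CE}$.
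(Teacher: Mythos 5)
Your feasibility argument is the same as the paper's: both take $\s_{\text{ce}}=\softmax(\f_\x(\w_0))\in\PP$ and verify via the chain rule (using $\sum_{\ybar}s_{\ybar}=1$ to absorb the $-\partial f_{\x,y}$ term) that $-A\s_{\text{ce}}$ reproduces the primal cross-entropy gradient; nothing to add there.

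Your detection mechanism, however, takes a genuinely different route, and the difference matters for the ``inexpensively'' clause of the proposition. You test the exact condition $\gamma_t>0$ by checking the sign of the unclipped numerator $-\eta\,\ddelta_t^\top\rr_t+\s^\top\bb$. That characterization is correct (the denominator $\eta\|\ddelta_t\|^2$ is nonnegative, so the sign of the optimal unclipped step is the sign of the numerator), and it is sharper than what the paper proves. But it can only be evaluated \emph{after} the backward pass that produces $\ddelta_t$ for the candidate direction $\s_{\text{ce}}$; if the test fails, switching to the conditional gradient forces a second backward pass through the network, so the ``automatic switch'' is no longer free. The paper instead derives a merely \emph{sufficient} condition $\s^\top\vv>0$, where $\vv$ is the vector of augmented scores of the linearized model (the proof uses $\rho(\w_0)\ge 0$ to drop a term and conclude $\s^\top\vv>0\Rightarrow\gamma_t>0$), and then approximates $\T_{\w_0}\f_\x(\w_t)\simeq\f_\x(\w_0)$ so that the test reduces to an $O(|\Y|)$ inner product against quantities already available from the forward pass. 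This lets \texttt{get\_s} choose between $\s_{\text{ce}}$ and the hinge vertex $\mathds{1}_{\ybar^\star}$ \emph{before} any backward pass is committed. In short: your test is exact but mis-placed in the computational pipeline; the paper's is conservative and approximate but genuinely inexpensive. If you want to keep your exact criterion, you would need to argue either that the failure case is rare enough that the occasional extra backward pass is negligible, or restructure the argument so the decision depends only on forward-pass quantities as the paper does.
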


For simplicity, we divide Proposition \ref{prop:ce_direction} into two distinct parts: first we show how the CE gradient gives a feasible direction in the dual, and then how it can be detected to be an ascent direction.

\begin{lemma} \label{lemma:ce_direction_feasible}
The gradient of cross-entropy in the primal gives a feasible direction in the dual. In other words, the gradient of cross-entropy $\bm{g}$ in the primal is such that there exists a dual search direction $\s \in \PP$ verifying $\bm{g} = - A \s$.
\end{lemma}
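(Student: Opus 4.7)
The plan is to exhibit an explicit $\s \in \PP$ such that the primal cross-entropy gradient equals $-A\s$, and the natural candidate is the softmax vector itself. Concretely, let $\mathbf{p}(\w) \in \mathbb{R}^{|\Y|}$ denote the softmax probabilities, $p_{\ybar}(\w) = \exp(f_{\x,\ybar}(\w)) / \sum_{k \in \Y} \exp(f_{\x,k}(\w))$. I would first observe that $\mathbf{p}(\w_0) \in \PP$ by definition of the softmax: its entries are strictly positive and sum to one.

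Next I would compute the gradient of $\rho(\w) + \LL_{CE}(\f_\x(\w),y)$ at $\w_0$ via the chain rule. Writing $\partial_{s_k}\LL_{CE}(\s,y) = p_k - \mathds{1}_{k=y}$ and using $\sum_k p_k = 1$, this gives
\begin{equation*}
\bm{g} \;=\; \evalat{\partial \rho(\w)}{\w_0} \;+\; \sum_{\ybar \in \Y} p_{\ybar}(\w_0)\bigl(\evalat{\partial f_{\x,\ybar}(\w)}{\w_0} - \evalat{\partial f_{\x,y}(\w)}{\w_0}\bigr),
\end{equation*}
where the subtraction of $\partial f_{\x,y}$ comes from folding the $\mathds{1}_{k=y}$ term into each summand using $\sum_{\ybar} p_{\ybar}(\w_0) = 1$.

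Then I would match this against the definition of $A$ from Section~\ref{sec:preliminaries}: since $\mathbf{a}_{\ybar} = \evalat{\partial \rho(\w)}{\w_0} + \evalat{\partial f_{\x,\ybar}(\w)}{\w_0} - \evalat{\partial f_{\x,y}(\w)}{\w_0}$ and $A = (\eta \mathbf{a}_{\ybar})_{\ybar \in \Y}$, setting $\s = \mathbf{p}(\w_0)$ yields
\begin{equation*}
A\s \;=\; \eta \sum_{\ybar \in \Y} p_{\ybar}(\w_0) \mathbf{a}_{\ybar} \;=\; \eta\, \evalat{\partial \rho(\w)}{\w_0} \;+\; \eta \sum_{\ybar \in \Y} p_{\ybar}(\w_0)\bigl(\evalat{\partial f_{\x,\ybar}(\w)}{\w_0} - \evalat{\partial f_{\x,y}(\w)}{\w_0}\bigr) \;=\; \eta\, \bm{g},
\end{equation*}
where again $\sum_{\ybar} p_{\ybar}(\w_0) = 1$ is used to pull out the regularization term. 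Hence $\bm{g} = -A\s$ up to the constant factor $\eta$ (which is absorbed into the step-size interpretation), and $\s \in \PP$ as required.

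The proof is essentially a direct calculation, so there is no real obstacle; the only subtlety is recognizing that the $-\mathds{1}_{k=y}$ term in the softmax gradient can be rewritten as $-\sum_{\ybar} p_{\ybar} \mathds{1}_{y}$, which is exactly what lets the offset $-\partial f_{\x,y}$ appear uniformly inside every $\mathbf{a}_{\ybar}$ and thus match the structure of $A$. The sign/scaling convention between $\bm{g}$ and $-A\s$ should be stated carefully to mirror the convention of Section~\ref{sec:appendix_simplified_fw_algo} (where $\w_\s = -A\s$ is the primal step direction associated to the dual iterate $\s$).
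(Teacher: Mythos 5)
Your proof is correct and takes essentially the same route as the paper: both exhibit the softmax of the scores as the dual direction $\s_{\text{ce}} \in \PP$ and verify, via the chain rule and $\sum_{\ybar} p_{\ybar} = 1$, that $A\s_{\text{ce}}$ reproduces the primal cross-entropy gradient of $\rho + \LL_{CE}\circ\f_\x$ (the paper merely asserts this verification ``by going back to the definition of $A$''; you carry it out explicitly). Your closing remark about the sign and the factor $\eta$ is apt --- with the convention $A = (\eta \mathbf{a}_{\ybar})_{\ybar \in \Y}$, the quantity $-A\s_{\text{ce}}$ is $\eta$ times the \emph{negative} gradient, i.e.\ the primal step direction $\w_\s$, which is how the lemma's equality is meant to be read.
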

\begin{proof}
We consider the vector of scores $\left(f_{\x, \ybar}(\w)\right)_{\ybar \in \mathcal{Y}} \in \mathbb{R}^{|\Y|}$. We compute its softmax: $\s_{\text{ce}} = \left( \frac{ \exp \left(f_{\x, \ybar}(\w) \right) }{\sum_{j \in \Y} \exp \left(f_{\x, j}(\w) \right)} \right)_{\ybar \in \Y}$. Clearly, $\s_{\text{ce}} \in \PP$ by property of the softmax. Furthermore, by going back to the definition of $A$, one can easily verify that $-A \s_{\text{ce}}$ is exactly the primal gradient given by a backward pass through the cross-entropy loss instead of the hinge loss. This concludes the proof.
\end{proof}

The previous lemma has shown that we can use the gradient of cross-entropy as a feasible direction $\s_{\text{ce}}$ in the dual.
The next step is to make it a dual ascent direction, that is a direction which always permits improvement on the dual objective (unless at the optimal point).
In what follows, we show that we can inexpensively (approximately) compute a sufficient condition for $\s_{\text{ce}}$ to be an ascent direction.
If the condition is not satisfied, then we can automatically switch to use the subgradient of the hinge loss (which is known as an ascent direction in the dual).

\begin{lemma}
Let $\s \in \PP$ be a feasible direction in the dual, and $\vv = \left( \T_{\w_0}\f_{\x}(\w_t)_{\ybar} + \Delta(\ybar, y) - \T_{\w_0}\f_{\x}(\w_t)_y \right)_{\ybar \in \Y} \in \mathbb{R}^{|\Y|}$ be the vector of augmented scores output by the linearized model.
Let us assume that we apply the single-step Proximal Frank-Wolfe algorithm (that is, we have $t=1$), and that $\rho$ is a non-negative function. \\
Then $\s^\top \vv > 0$ is a sufficient condition for $\s$ to be an ascent direction in the dual.
\end{lemma}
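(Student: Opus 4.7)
The plan is to reduce the claim to a direct computation of the first-order directional derivative of the dual objective. Since $D(\aalpha) = -\tfrac{1}{2\eta}\|A\aalpha\|^2 + \bb^\top\aalpha$ is concave, the step from $\aalpha$ toward $\s \in \PP$ is an ascent direction if and only if $(\s - \aalpha)^\top \nabla D(\aalpha) > 0$. So it suffices to show that at $t=1$, this directional derivative equals $\s^\top \vv$. Both sides depend on the same ingredients (the linearization of $\f_\x$ at $\w_0$ and the point $\w_1$), so the proof is essentially an algebraic identification.

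First, I would unfold the single-step, $t=1$ setting. By the initialization in Algorithm \ref{algo:proxfw}, $\aalpha = \mathds{1}_y$. A short computation on the columns of $A$ gives $A \mathds{1}_y = \eta\,\evalat{\partial \rho(\w)}{\w_0} = \eta \rr_t$, since the $f_{\x,y}$ terms in $\mathbf{a}_y$ cancel. Combined with $\sum_{\ybar} s_{\ybar} = 1$, this yields $A\s - A\aalpha = \eta\,\ddelta_t$, exactly the identity already used in equation (\ref{app:eq:simplification_2}) of the appendix. Moreover $\bb^\top \mathds{1}_y = b_y = 0$ by construction, so $\lambda = 0$. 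Substituting into $\nabla D(\aalpha) = -\tfrac{1}{\eta}A^\top A \aalpha + \bb$ and taking the inner product with $\s - \aalpha$, I get the clean form
\begin{equation}
(\s - \aalpha)^\top \nabla D(\aalpha) \;=\; -\eta\,\ddelta_t^\top \rr_t + \lambda_\s.
\end{equation}

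Second, I would expand $\s^\top \vv$ directly from the definition. Using $\w_1 - \w_0 = -\eta\,\rr_t$, the Taylor expansion of $\f_\x$ around $\w_0$ evaluated at $\w_1$ gives
\begin{equation}
v_{\ybar} \;=\; b_{\ybar} - \eta\bigl(\evalat{\partial f_{\x,\ybar}(\w)}{\w_0} - \evalat{\partial f_{\x,y}(\w)}{\w_0}\bigr)^\top \rr_t.
\end{equation}
Taking the inner product with $\s$ and recognising the second term as $-\eta\,\ddelta_t^\top \rr_t$ via the definition of $\ddelta_t$, I obtain $\s^\top \vv = \lambda_\s - \eta\,\ddelta_t^\top \rr_t$, which matches the directional derivative above. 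Thus $\s^\top \vv > 0 \Leftrightarrow (\s - \aalpha)^\top \nabla D(\aalpha) > 0$, and by concavity of $D$ this is enough for $\s - \aalpha$ to be an ascent direction.

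I do not expect a real obstacle in the first-order computation: the main care required is bookkeeping between the primal-side quantities $(\rr_t, \ddelta_t, \w_1 - \w_0)$ and the dual-side quantities $(A\aalpha, A\s, \lambda_\s)$, and making sure to use the simplified $\bb$ from Section \ref{sec:preliminaries} (where the constant $\rho(\w_0)$ has been dropped, which is harmless because it cancels in $\lambda_\s - \lambda$). The hypothesis that $\rho$ is non-negative does not enter the first-order identity itself; my reading is that it is used to guarantee that $\w_1 = \w_0 - \eta\rr_t$ is a meaningful starting point (so that $\bb^\top \mathds{1}_y = 0$ and the initial dual value is non-positive), which makes the statement "ascent direction" match "the next dual iterate strictly improves" rather than being vacuous.
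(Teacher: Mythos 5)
Your proof is correct, and it is worth comparing to the paper's own argument because it lands in a slightly different --- and in fact stronger --- place. Both proofs are first-order computations at the initial dual point $\aalpha=\mathds{1}_y$: the paper shows that $\s^\top\vv>0$ forces the numerator of the optimal step-size $\gamma_t$ to be positive, whereas you show directly that $\s^\top\vv$ \emph{equals} the directional derivative $(\s-\aalpha)^\top\nabla D(\aalpha)=-\eta\,\ddelta_t^\top\rr_t+\lambda_\s$ of the dual objective, which (under the convention $\lambda_1=\bb^\top\mathds{1}_y=0$ used in the Proximal Frank-Wolfe algorithm) is exactly $\eta^{-1}$ times the numerator of $\gamma_t$. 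The divergence is pure bookkeeping of the regularizer: the paper expands $v_{\ybar}$ through the representation $\mathbf{a}_{\ybar}^\top(\w_t-\w_0)+b_{\ybar}$, in which $\T_{\w_0}\rho$ has been folded into the piecewise-linear term, and since $\vv$ by definition omits the regularizer a residual $-\eta\rho(\w_0)$ survives to the end of the chain; the hypothesis $\rho\geq 0$ is invoked precisely to discard that residual, which is why the paper only obtains a one-way implication. Your expansion of $\T_{\w_0}\f_\x(\w_1)$ keeps the score terms and the regularizer separate from the start, the $\rho(\w_0)$ contributions cancel exactly, and you obtain the sharper statement $\s^\top\vv>0\iff\gamma_t>0$ with no non-negativity assumption. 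The one inaccuracy is your closing speculation about the role of that hypothesis: it is not there to make $\w_1$ a ``meaningful starting point'' or to secure $\bb^\top\mathds{1}_y=0$ (which holds by construction of the simplified $\bb$); it is simply the price the paper pays for the residual $-\eta\rho(\w_0)$ in its version of the computation. This misreading does not affect the validity of your proof.
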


\begin{proof}
Let $\s \in \PP$, $\vv = \left( \T_{\w_0}\f_{\x}(\w_t)_{\ybar} + \Delta(\ybar, y) - \T_{\w_0}\f_{\x}(\w_t)_y \right)_{\ybar \in \Y}$. By definition, we have that:
\begin{equation}
\begin{split}
\vv
    &= \left(\mathbf{a}_{\ybar}^\top (\w_t - \w_0) + b_{\ybar} - \T_{\w_0}\rho(\w) \right)_{\ybar \in \Y}, \\
    &= \frac{1}{\eta} A^\top (\w_t - \w_0) + \bb -  \left(\T_{\w_0}\rho(\w) \right)_{\ybar \in \Y}.
\end{split}
\end{equation}
Therefore:
\begin{equation}
\begin{split}
    &\s^\top \vv > 0 \\
    \iff& \frac{1}{\eta} (A \s)^\top (\w_t - \w_0) + \s^\top\bb -  \s^\top\left(\T_{\w_0}\rho(\w) \right)_{\ybar \in \Y} > 0, \\
    \iff& (A \s)^\top (\w_t - \w_0) + \eta \s^\top \bb - \eta \T_{\w_0}\rho(\w)  > 0, \quad \text{(since $\s \in \PP$ and $\eta > 0$)} \\
    \iff& -\w_s ^\top (\w_t - \w_0) + \eta \s^\top \bb - \eta \rho(\w_0) - \eta \partial\rho(\w_0)^\top(\w_t - \w_0) > 0, \\
    \iff& -\w_s ^\top (\w_t - \w_0) + \eta \s^\top \bb - \eta \rho(\w_0) + (\w_t - \w_0)^\top(\w_t - \w_0) > 0, \\
    \iff& (\w_t - \w_0 -\w_s) ^\top (\w_t - \w_0) + \eta \s^\top \bb - \eta \rho(\w_0) > 0, \\
    \implies& (\w_t - \w_0 -\w_s) ^\top (\w_t - \w_0) + \eta \s^\top \bb > 0, \quad \text{(because $\rho(\w_0) \geq 0$)} \\
    \iff& \gamma_t > 0 \quad \text{(we have that $\lambda_t=0$ at $t=1$)}.
\end{split}
\end{equation}

We have just shown that if $\s^\top \vv > 0$, then $\gamma_t > 0$. Since $\gamma_t$ is an optimal step-size, this indicates that $\s$ is an ascent direction (we would obtain $\gamma_t=0$ for a direction $\s$ that cannot provide improvement).
\end{proof}

\paragraph{Approximate Condition.} In practice, we consider that $\T_{\w_0}\f_{\x}(\w_t) \simeq \f_{\x}(\w_0)$.
Indeed, for $t=1$, we have that $\| \T_{\w_0}\f_{\x}(\w) - \f_{\x}(\w_0) \| = \mathcal{O}(\|\w_t - \w_0\|)$, and $\|\w_t - \w_0\| = \|\eta \partial_w \rho(\w_0))\|$, which is typically very small (we use a weight decay coefficient in the order of $1E^{-4}$ in our experimental settings).
Therefore, we replace $\T_{\w_0}\f_{\x}(\w)$ by $\f_{\x}(\w_0)$ in the above criterion, which becomes inexpensive since $\f_{\x}(\w_0)$ is already computed by the forward pass.

\subsection{Nesterov Momentum}
\label{sec:appendix_nesterov}

As can be seen in the previous primal-dual algorithms, taking a step in the dual can be decomposed into two stages: the initialization and the movement along the search direction.
The initialization step is not informative about the optimization problem.
Therefore, we discard it from the momentum velocity, and only accumulate the step along the conditional gradient (scaled by $\gamma_t \eta$).
This results in the following velocity update:
\begin{equation}
\z_{t+1} =  \mu \z_t - \eta \gamma_t (\rr_t + \ddelta_t).
\end{equation}

\vfill
\pagebreak

\section{Experimental Details on the CIFAR Data Sets}
\label{sec:details_cifar}

\subsection{Adaptive Gradient Baselines: Cross-Validation (Without Data Augmentation)}
\label{sec:cross_validation_cifar}

\begin{table}[H]
\centering
\begin{tabular}{cccc}
\toprule
               $l_2$ &             $\eta$ & Accuracy CIFAR-10 (\%) & Accuracy CIFAR-100 (\%) \\
\midrule
           0.0001 &           0.001 &                  71.6 &                  39.44 \\
 {\bf {\bf 0.0001}} &  {\bf {\bf 0.01}} &            {\bf 88.18} &             {\bf 55.72} \\
           0.0001 &             0.1 &                  86.4 &                  55.44 \\
           0.0001 &               1 &                 68.48 &                  20.68 \\
\
\end{tabular}
\label{tab:adagrad_dn}
\caption{Cross-Validation for ADAGRAD on DN architecture         (best validation accuracy obtained during training).}
\end{table}

\begin{table}[H]
\centering
\begin{tabular}{cccc}
\toprule
          $l_2$ &        $\eta$ & Accuracy CIFAR-10 (\%) & Accuracy CIFAR-100 (\%) \\
\midrule
      0.0001 &      0.001 &                 68.98 &                  31.86 \\
 {\bf 0.0001} &  {\bf 0.01} &             {\bf 86.4} &                  53.82 \\
      0.0001 &        0.1 &                  83.6 &                  51.18 \\
      0.0005 &      0.001 &                 68.66 &                   32.5 \\
 {\bf 0.0005} &  {\bf 0.01} &                  86.3 &             {\bf 56.16} \\
      0.0005 &        0.1 &                 77.92 &                  44.12 \\
\
\end{tabular}
\label{tab:adagrad_wrn}
\caption{Cross-Validation for ADAGRAD on WRN architecture         (best validation accuracy obtained during training).}
\end{table}

\begin{table}[H]
\centering
\begin{tabular}{cccc}
\toprule
               $l_2$ &              $\eta$ & Accuracy CIFAR-10 (\%) & Accuracy CIFAR-100 (\%) \\
\midrule
           0.0001 &           0.0001 &                 86.26 &                   50.7 \\
 {\bf {\bf 0.0001}} &  {\bf {\bf 0.001}} &            {\bf 89.42} &              {\bf 63.9} \\
           0.0001 &             0.01 &                 81.12 &                  51.82 \\
\
\end{tabular}
\label{tab:adam_dn}
\caption{Cross-Validation for ADAM on DN architecture         (best validation accuracy obtained during training).}
\end{table}

\begin{table}[H]
\centering
\begin{tabular}{cccc}
\toprule
               $l_2$ &              $\eta$ & Accuracy CIFAR-10 (\%) & Accuracy CIFAR-100 (\%) \\
\midrule
           0.0001 &           0.0001 &                  79.7 &                  41.42 \\
 {\bf {\bf 0.0001}} &  {\bf {\bf 0.001}} &             {\bf 86.1} &              {\bf 58.7} \\
           0.0001 &             0.01 &                 80.06 &                  50.86 \\
           0.0005 &           0.0001 &                 78.88 &                  40.08 \\
           0.0005 &            0.001 &                 85.14 &                  55.26 \\
           0.0005 &             0.01 &                 72.54 &                  36.82 \\
\
\end{tabular}
\label{tab:adam_wrn}
\caption{Cross-Validation for ADAM on WRN architecture         (best validation accuracy obtained during training).}
\end{table}

\begin{table}[H]
\centering
\begin{tabular}{cccc}
\toprule
               $l_2$ &              $\eta$ & Accuracy CIFAR-10 (\%) & Accuracy CIFAR-100 (\%) \\
\midrule
           0.0001 &           0.0001 &                 84.28 &                  49.54 \\
 {\bf {\bf 0.0001}} &  {\bf {\bf 0.001}} &             {\bf 90.4} &             {\bf 68.54} \\
           0.0001 &             0.01 &                 83.98 &                  50.44 \\
\
\end{tabular}
\label{tab:amsgrad_dn}
\caption{Cross-Validation for AMSGRAD on DN architecture         (best validation accuracy obtained during training).}
\end{table}

\begin{table}[H]
\centering
\begin{tabular}{cccc}
\toprule
               $l_2$ &              $\eta$ & Accuracy CIFAR-10 (\%) & Accuracy CIFAR-100 (\%) \\
\midrule
           0.0001 &           0.0001 &                 75.86 &                   41.6 \\
 {\bf {\bf 0.0001}} &  {\bf {\bf 0.001}} &            {\bf 87.02} &              {\bf 59.6} \\
           0.0001 &             0.01 &                 82.32 &                  52.12 \\
           0.0005 &           0.0001 &                 75.74 &                  42.28 \\
           0.0005 &            0.001 &                 86.16 &                  57.82 \\
           0.0005 &             0.01 &                 75.82 &                  36.48 \\
\
\end{tabular}
\label{tab:amsgrad_wrn}
\caption{Cross-Validation for AMSGRAD on WRN architecture         (best validation accuracy obtained during training).}
\end{table}

\begin{table}[H]
\centering
\begin{tabular}{cccc}
\toprule
               $l_2$ &            $\eta$ & Accuracy CIFAR-10 (\%) & Accuracy CIFAR-100 (\%) \\
\midrule
           0.0001 &          0.001 &                 72.72 &                  40.96 \\
           0.0001 &           0.01 &                 83.26 &                  53.12 \\
 {\bf {\bf 0.0001}} &  {\bf {\bf 0.1}} &             {\bf 91.7} &              {\bf 59.7} \\
           0.0001 &              1 &                 10.16 &                   1.16 \\
\
\end{tabular}
\label{tab:bpgrad_dn}
\caption{Cross-Validation for BPGRAD on DN architecture         (best validation accuracy obtained during training).}
\end{table}

\begin{table}[H]
\centering
\begin{tabular}{cccc}
\toprule
          $l_2$ &        $\eta$ & Accuracy CIFAR-10 (\%) & Accuracy CIFAR-100 (\%) \\
\midrule
      0.0001 &      0.001 &                 64.98 &                   31.9 \\
      0.0001 &       0.01 &                 78.46 &                  44.26 \\
 {\bf 0.0001} &   {\bf 0.1} &            {\bf 89.24} &                  54.42 \\
      0.0001 &          1 &                  16.1 &                   1.16 \\
      0.0005 &      0.001 &                 68.08 &                  33.26 \\
 {\bf 0.0005} &  {\bf 0.01} &                 85.44 &              {\bf 59.9} \\
      0.0005 &        0.1 &                 88.44 &                  51.28 \\
      0.0005 &          1 &                 10.16 &                   1.16 \\
\
\end{tabular}
\label{tab:bpgrad_wrn}
\caption{Cross-Validation for BPGRAD on WRN architecture         (best validation accuracy obtained during training).}
\end{table}

\subsection{Adaptive Gradient Baselines: Cross-Validation (With Data Augmentation)}
\label{sec:cross_validation_cifar_augmented}

\begin{table}[H]
    \centering
    \begin{tabular}{ccccc}
    \toprule
              $l_2$ &         $\eta$ &   batchsize & Accuracy CIFAR-10 (\%) & Accuracy CIFAR-100 (\%) \\
    \midrule
          0.0001 &      0.0001 &          64 &                 90.38 &                   61.6 \\
          0.0001 &      0.0001 &         128 &                 87.86 &                  57.82 \\
          0.0001 &      0.0001 &         256 &                 86.66 &                  53.64 \\
     {\bf 0.0001} &  {\bf 0.001} &     {\bf 64} &                 92.52 &             {\bf 69.66} \\
     {\bf 0.0001} &  {\bf 0.001} &  {\bf 128} &            {\bf 92.72} &                   69.5 \\
          0.0001 &       0.001 &         256 &                 92.64 &                  67.56 \\
          0.0001 &        0.01 &          64 &                  82.1 &                     45 \\
          0.0001 &        0.01 &         128 &                  83.9 &                   53.4 \\
          0.0001 &        0.01 &         256 &                 86.86 &                   58.1 \\
    \
    \end{tabular}
    \label{tab:amsgrad_dn_augmented}
    \caption{Cross-Validation for AMSGRAD on DN architecture with data augmentation (best validation accuracy obtained during training).}
    \end{table}

    \begin{table}[H]
    \centering
    \begin{tabular}{ccccc}
    \toprule
              $l_2$ &         $\eta$ &   batchsize & Accuracy CIFAR-10 (\%) & Accuracy CIFAR-100 (\%) \\
    \midrule
          0.0001 &      0.0001 &         128 &                  90.5 &                  64.36 \\
          0.0001 &      0.0001 &         256 &                  89.6 &                  62.02 \\
          0.0001 &      0.0001 &         512 &                 88.26 &                  58.68 \\
     {\bf 0.0001} &  {\bf 0.001} &    {\bf 128} &                  91.7 &              {\bf 69.3} \\
          0.0001 &       0.001 &         256 &                  91.8 &                  68.98 \\
     {\bf 0.0001} &  {\bf 0.001} &  {\bf 512} &            {\bf 91.88} &                  68.64 \\
          0.0001 &        0.01 &         128 &                 83.36 &                  53.72 \\
          0.0001 &        0.01 &         256 &                 84.58 &                  57.28 \\
          0.0001 &        0.01 &         512 &                 87.42 &                  60.68 \\
          0.0005 &      0.0001 &         128 &                 91.44 &                  65.52 \\
          0.0005 &      0.0001 &         256 &                  89.7 &                  61.98 \\
          0.0005 &      0.0001 &         512 &                 88.48 &                   59.1 \\
          0.0005 &       0.001 &         128 &                 90.82 &                  67.38 \\
          0.0005 &       0.001 &         256 &                    91 &                  67.58 \\
          0.0005 &       0.001 &         512 &                 91.06 &                  67.06 \\
          0.0005 &        0.01 &         128 &                  72.6 &                   34.8 \\
          0.0005 &        0.01 &         256 &                 76.56 &                  41.82 \\
          0.0005 &        0.01 &         512 &                 79.12 &                   45.6 \\
    \
    \end{tabular}
    \label{tab:amsgrad_wrn_augmented}
    \caption{Cross-Validation for AMSGRAD on WRN architecture with data augmentation (best validation accuracy obtained during training).}
    \end{table}

    \begin{table}[H]
    \centering
    \begin{tabular}{ccccc}
    \toprule
              $l_2$ &       $\eta$ &  batchsize & Accuracy CIFAR-10 (\%) & Accuracy CIFAR-100 (\%) \\
    \midrule
          0.0001 &      0.01 &         64 &                  92.8 &                  69.12 \\
          0.0001 &      0.01 &        128 &                 91.38 &                  66.26 \\
          0.0001 &      0.01 &        256 &                 89.46 &                  60.68 \\
     {\bf 0.0001} &  {\bf 0.1} &  {\bf 64} &            {\bf 95.34} &                  68.04 \\
          0.0001 &       0.1 &         64 &                 95.34 &                  66.78 \\
          0.0001 &       0.1 &        128 &                  94.7 &                  73.62 \\
     {\bf 0.0001} &  {\bf 0.1} &   {\bf 128} &                  94.7 &              {\bf 74.1} \\
          0.0001 &       0.1 &        256 &                    94 &                   70.9 \\
          0.0001 &         1 &         64 &                 77.04 &                  38.44 \\
          0.0001 &         1 &        128 &                 82.56 &                  52.12 \\
          0.0001 &         1 &        256 &                 87.38 &                   60.6 \\
          0.0001 &        10 &         64 &                 10.16 &                   1.16 \\
          0.0001 &        10 &        128 &                 10.16 &                   1.16 \\
          0.0001 &        10 &        256 &                 10.56 &                   1.62 \\
    \
    \end{tabular}
    \label{tab:dfw_dn_augmented}
    \caption{Cross-Validation for DFW on DN architecture with data augmentation (best validation accuracy obtained during training).}
    \end{table}

    \begin{table}[H]
    \centering
    \begin{tabular}{ccccc}
    \toprule
              $l_2$ &       $\eta$ &   batchsize & Accuracy CIFAR-10 (\%) & Accuracy CIFAR-100 (\%) \\
    \midrule
          0.0001 &      0.01 &         128 &                 93.24 &                  71.18 \\
          0.0001 &      0.01 &         256 &                  91.8 &                  67.36 \\
          0.0001 &      0.01 &         512 &                  90.9 &                  64.74 \\
          0.0001 &       0.1 &         128 &                 94.18 &                  74.26 \\
          0.0001 &       0.1 &         256 &                 94.66 &                  73.24 \\
          0.0001 &       0.1 &         512 &                 94.02 &                  71.66 \\
          0.0001 &         1 &         128 &                  84.7 &                   55.1 \\
          0.0001 &         1 &         256 &                 89.62 &                  61.88 \\
     {\bf 0.0001} &  {\bf 1} &  {\bf 512} &            {\bf 94.98} &                  73.94 \\
          0.0001 &        10 &         128 &                 10.72 &                   1.32 \\
          0.0001 &        10 &         256 &                 10.72 &                   4.96 \\
          0.0001 &        10 &         512 &                 13.62 &                    6.4 \\
          0.0005 &      0.01 &         128 &                  94.1 &                  72.14 \\
          0.0005 &      0.01 &         256 &                 92.72 &                  69.96 \\
          0.0005 &      0.01 &         512 &                 90.84 &                  64.04 \\
          0.0005 &       0.1 &         128 &                 88.86 &                  63.06 \\
     {\bf 0.0005} &  {\bf 0.1} &    {\bf 256} &                 94.68 &             {\bf 75.34} \\
          0.0005 &       0.1 &         512 &                  94.1 &                  72.54 \\
          0.0005 &         1 &         128 &                  63.3 &                   26.9 \\
          0.0005 &         1 &         256 &                 72.08 &                  38.28 \\
          0.0005 &         1 &         512 &                 80.74 &                   48.1 \\
          0.0005 &         1 &         512 &                 80.74 &                  44.52 \\
          0.0005 &        10 &         128 &                 10.72 &                   1.36 \\
          0.0005 &        10 &         256 &                 10.72 &                    1.3 \\
          0.0005 &        10 &         512 &                 14.18 &                   1.98 \\
    \
    \end{tabular}
    \label{tab:dfw_wrn_augmented}
    \caption{Cross-Validation for DFW on WRN architecture with data augmentation (best validation accuracy obtained during training).}
    \end{table}

\subsection{Convergence Plots}

In this section we provide the convergence plots of the different algorithms on the CIFAR data sets without data augmentation.
In some cases the training performance can show some oscillations.
We emphasize that this is the result of cross-validating the initial learning rate based on the validation set performance: sometimes a better-behaved convergence would be obtained on the training set with a lower learning rate.
However this lower learning rate is not selected because it does not provide the best validation performance.

\begin{figure}[H]
\begin{minipage}{.47\textwidth}
\centering
\includegraphics[width=0.9\linewidth]{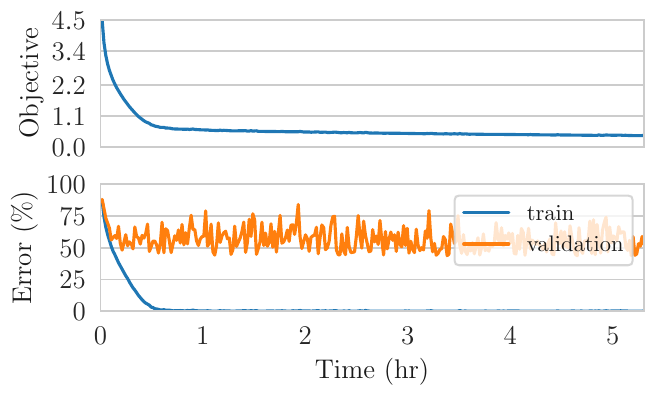}
\captionof{figure}{\em Convergence plot of Adagrad on CIFAR 100 with DN architecture.}
\label{fig:sensitivity_cifar100-dn-adagrad}
\end{minipage}%
\hfill
\begin{minipage}{.47\textwidth}
\centering
\includegraphics[width=0.9\linewidth]{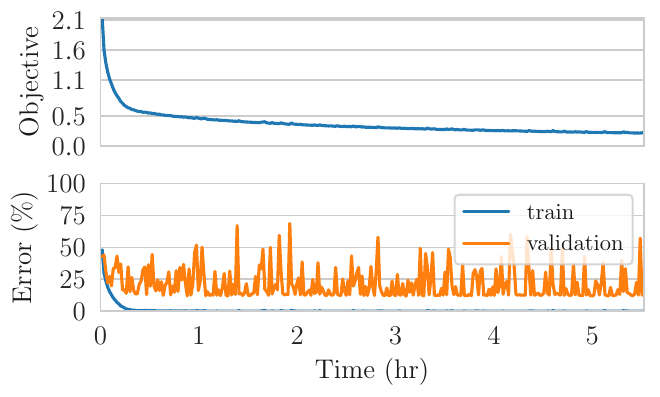}
\captionof{figure}{\em Convergence plot of Adagrad on CIFAR 10 with DN architecture.}
\label{fig:sensitivity_cifar10-dn-adagrad}
\end{minipage}
\end{figure}

\begin{figure}[H]
\begin{minipage}{.47\textwidth}
\centering
\includegraphics[width=0.9\linewidth]{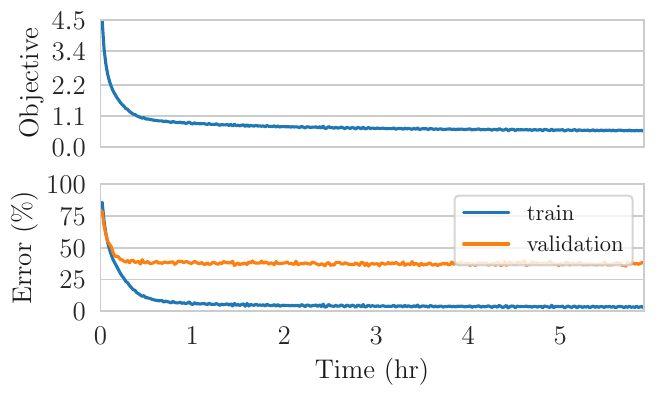}
\captionof{figure}{\em Convergence plot of Adam on CIFAR 100 with DN architecture.}
\label{fig:sensitivity_cifar100-dn-adam}
\end{minipage}%
\hfill
\begin{minipage}{.47\textwidth}
\centering
\includegraphics[width=0.9\linewidth]{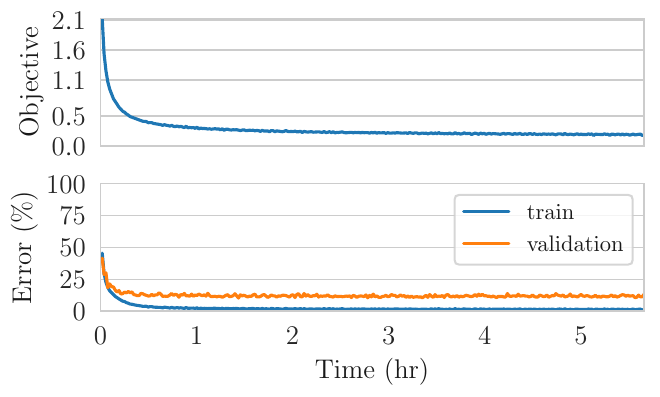}
\captionof{figure}{\em Convergence plot of Adam on CIFAR 10 with DN architecture.}
\label{fig:sensitivity_cifar10-dn-adam}
\end{minipage}
\end{figure}

\begin{figure}[H]
\begin{minipage}{.47\textwidth}
\centering
\includegraphics[width=0.9\linewidth]{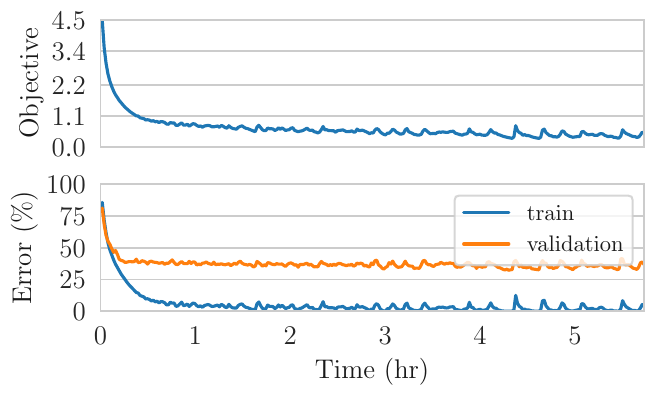}
\captionof{figure}{\em Convergence plot of AMSGrad on CIFAR 100 with DN architecture.}
\label{fig:sensitivity_cifar100-dn-amsgrad}
\end{minipage}%
\hfill
\begin{minipage}{.47\textwidth}
\centering
\includegraphics[width=0.9\linewidth]{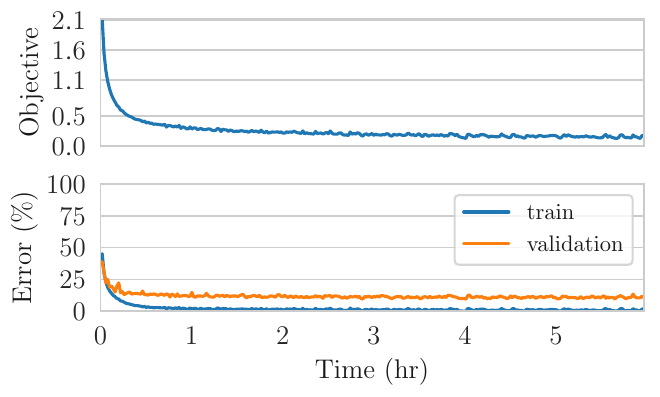}
\captionof{figure}{\em Convergence plot of AMSGrad on CIFAR 10 with DN architecture.}
\label{fig:sensitivity_cifar10-dn-amsgrad}
\end{minipage}
\end{figure}

\begin{figure}[H]
\begin{minipage}{.47\textwidth}
\centering
\includegraphics[width=0.9\linewidth]{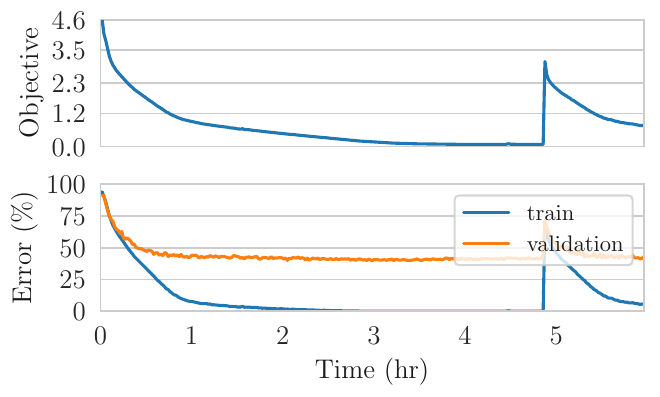}
\captionof{figure}{\em Convergence plot of BPGrad on CIFAR 100 with DN architecture.}
\label{fig:sensitivity_cifar100-dn-bpgrad}
\end{minipage}%
\hfill
\begin{minipage}{.47\textwidth}
\centering
\includegraphics[width=0.9\linewidth]{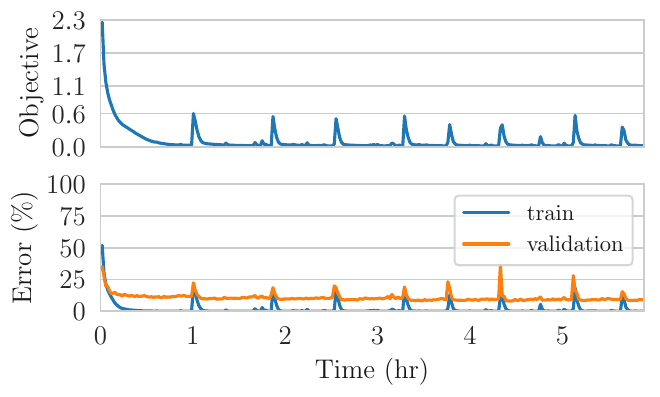}
\captionof{figure}{\em Convergence plot of BPGrad on CIFAR 10 with DN architecture.}
\label{fig:sensitivity_cifar10-dn-bpgrad}
\end{minipage}
\end{figure}

\begin{figure}[H]
\begin{minipage}{.47\textwidth}
\centering
\includegraphics[width=0.9\linewidth]{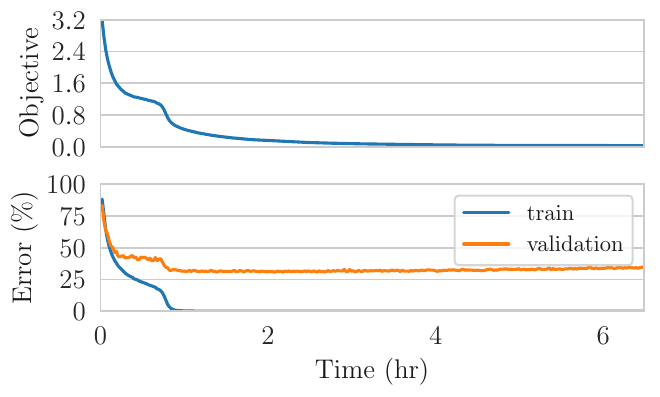}
\captionof{figure}{\em Convergence plot of DFW on CIFAR 100 with DN architecture.}
\label{fig:sensitivity_cifar100-dn-dfw}
\end{minipage}%
\hfill
\begin{minipage}{.47\textwidth}
\centering
\includegraphics[width=0.9\linewidth]{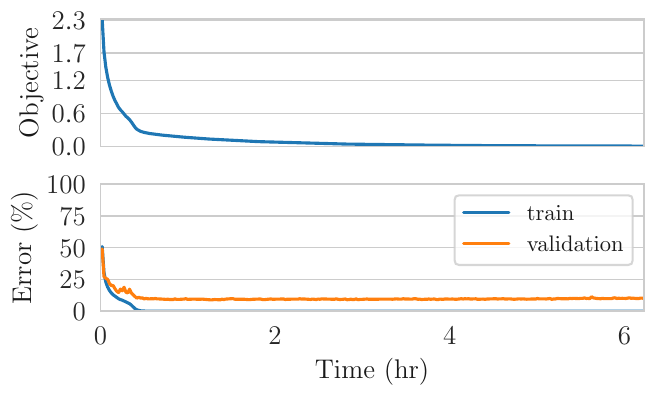}
\captionof{figure}{\em Convergence plot of DFW on CIFAR 10 with DN architecture.}
\label{fig:sensitivity_cifar10-dn-dfw}
\end{minipage}
\end{figure}

\begin{figure}[H]
\begin{minipage}{.47\textwidth}
\centering
\includegraphics[width=0.9\linewidth]{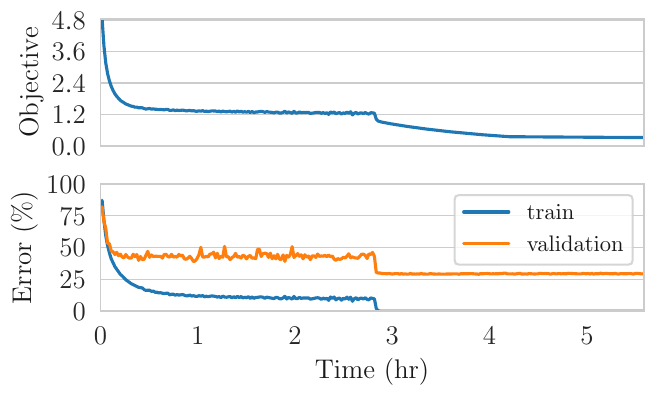}
\captionof{figure}{\em Convergence plot of SGD on CIFAR 100 with DN architecture.}
\label{fig:sensitivity_cifar100-dn-sgd}
\end{minipage}%
\hfill
\begin{minipage}{.47\textwidth}
\centering
\includegraphics[width=0.9\linewidth]{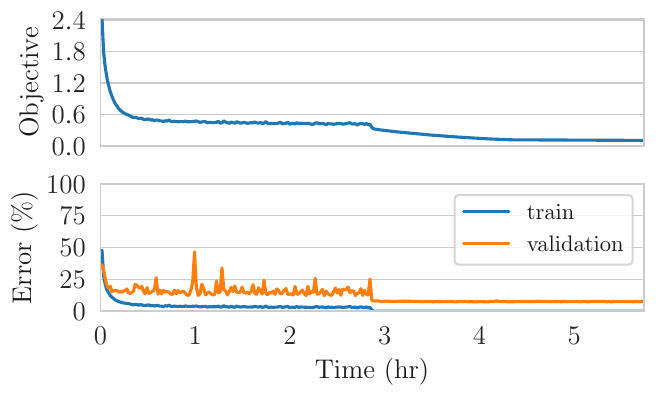}
\captionof{figure}{\em Convergence plot of SGD on CIFAR 10 with DN architecture.}
\label{fig:sensitivity_cifar10-dn-sgd}
\end{minipage}
\end{figure}

\begin{figure}[H]
\begin{minipage}{.47\textwidth}
\centering
\includegraphics[width=0.9\linewidth]{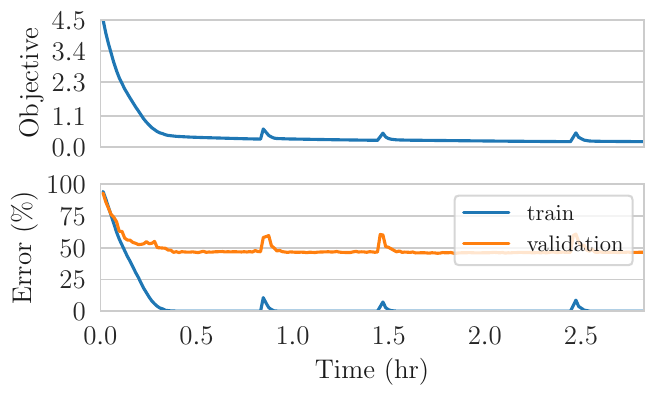}
\captionof{figure}{\em Convergence plot of Adagrad on CIFAR 100 with WRN architecture.}
\label{fig:sensitivity_cifar100-wrn-adagrad}
\end{minipage}%
\hfill
\begin{minipage}{.47\textwidth}
\centering
\includegraphics[width=0.9\linewidth]{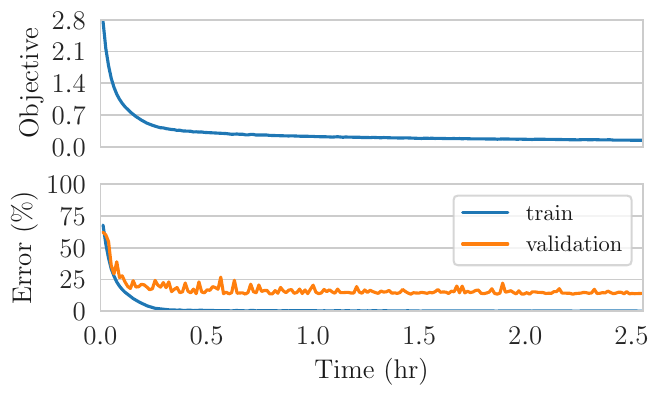}
\captionof{figure}{\em Convergence plot of Adagrad on CIFAR 10 with WRN architecture.}
\label{fig:sensitivity_cifar10-wrn-adagrad}
\end{minipage}
\end{figure}

\begin{figure}[H]
\begin{minipage}{.47\textwidth}
\centering
\includegraphics[width=0.9\linewidth]{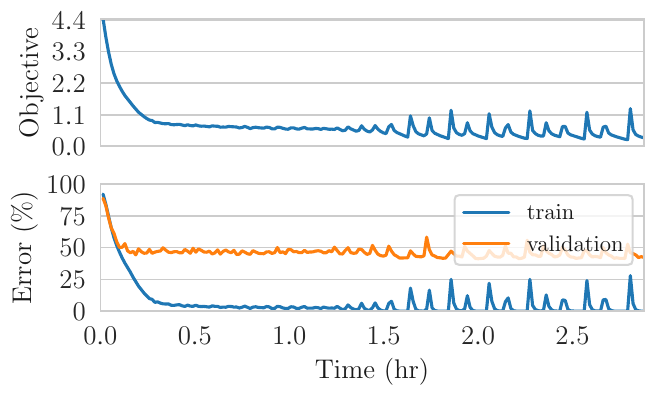}
\captionof{figure}{\em Convergence plot of Adam on CIFAR 100 with WRN architecture.}
\label{fig:sensitivity_cifar100-wrn-adam}
\end{minipage}%
\hfill
\begin{minipage}{.47\textwidth}
\centering
\includegraphics[width=0.9\linewidth]{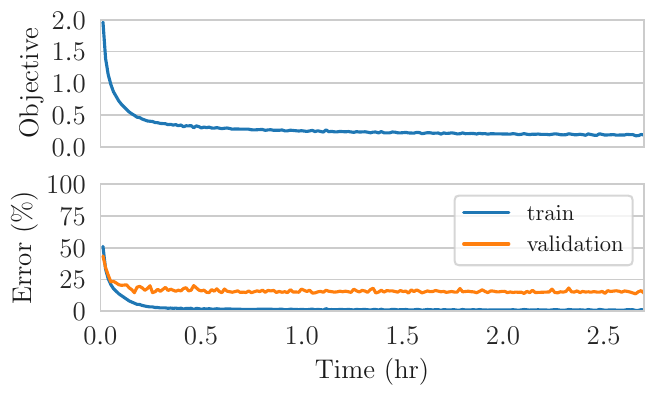}
\captionof{figure}{\em Convergence plot of Adam on CIFAR 10 with WRN architecture.}
\label{fig:sensitivity_cifar10-wrn-adam}
\end{minipage}
\end{figure}

\begin{figure}[H]
\begin{minipage}{.47\textwidth}
\centering
\includegraphics[width=0.9\linewidth]{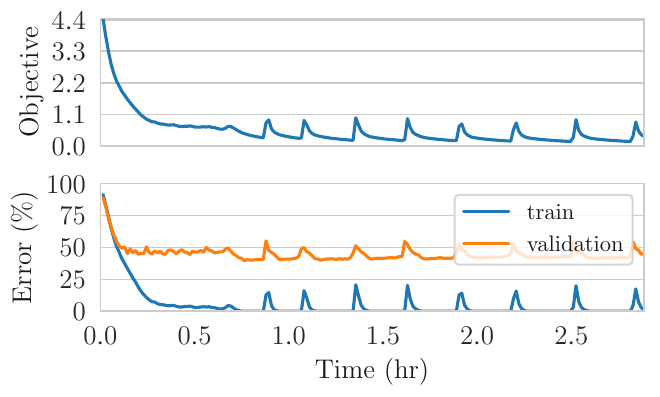}
\captionof{figure}{\em Convergence plot of AMSGrad on CIFAR 100 with WRN architecture.}
\label{fig:sensitivity_cifar100-wrn-amsgrad}
\end{minipage}%
\hfill
\begin{minipage}{.47\textwidth}
\centering
\includegraphics[width=0.9\linewidth]{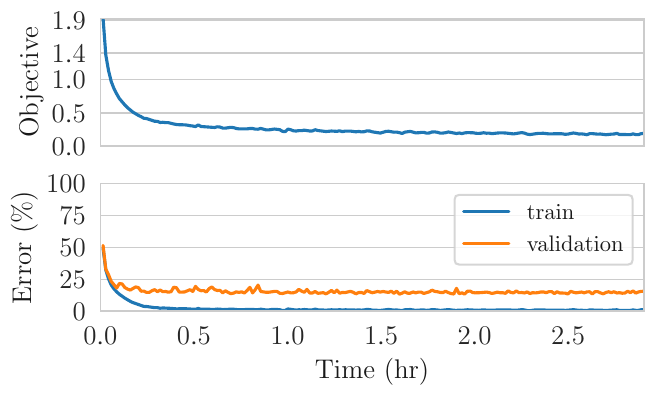}
\captionof{figure}{\em Convergence plot of AMSGrad on CIFAR 10 with WRN architecture.}
\label{fig:sensitivity_cifar10-wrn-amsgrad}
\end{minipage}
\end{figure}

\begin{figure}[H]
\begin{minipage}{.47\textwidth}
\centering
\includegraphics[width=0.9\linewidth]{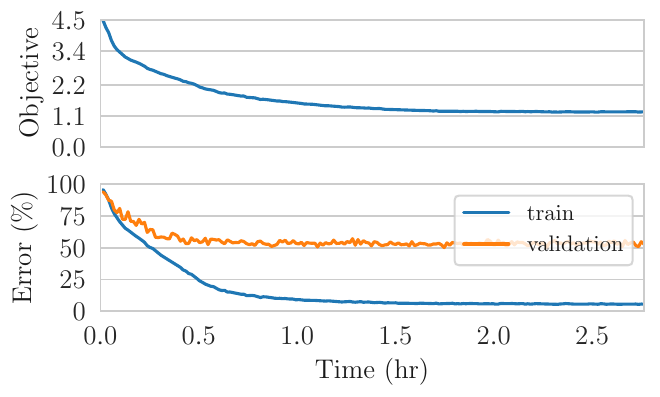}
\captionof{figure}{\em Convergence plot of BPGrad on CIFAR 100 with WRN architecture.}
\label{fig:sensitivity_cifar100-wrn-bpgrad}
\end{minipage}%
\hfill
\begin{minipage}{.47\textwidth}
\centering
\includegraphics[width=0.9\linewidth]{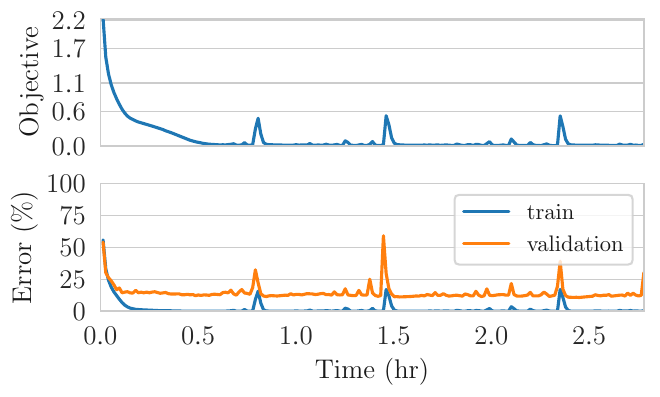}
\captionof{figure}{\em Convergence plot of BPGrad on CIFAR 10 with WRN architecture.}
\label{fig:sensitivity_cifar10-wrn-bpgrad}
\end{minipage}
\end{figure}

\begin{figure}[H]
\begin{minipage}{.47\textwidth}
\centering
\includegraphics[width=0.9\linewidth]{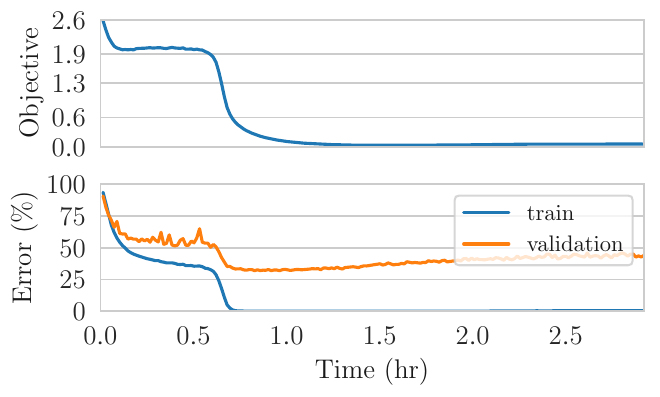}
\captionof{figure}{\em Convergence plot of DFW on CIFAR 100 with WRN architecture.}
\label{fig:sensitivity_cifar100-wrn-dfw}
\end{minipage}%
\hfill
\begin{minipage}{.47\textwidth}
\centering
\includegraphics[width=0.9\linewidth]{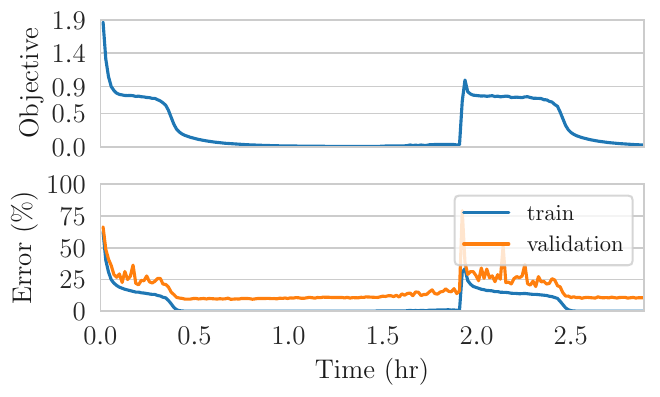}
\captionof{figure}{\em Convergence plot of DFW on CIFAR 10 with WRN architecture.}
\label{fig:sensitivity_cifar10-wrn-dfw}
\end{minipage}
\end{figure}

\begin{figure}[H]
\begin{minipage}{.47\textwidth}
\centering
\includegraphics[width=0.9\linewidth]{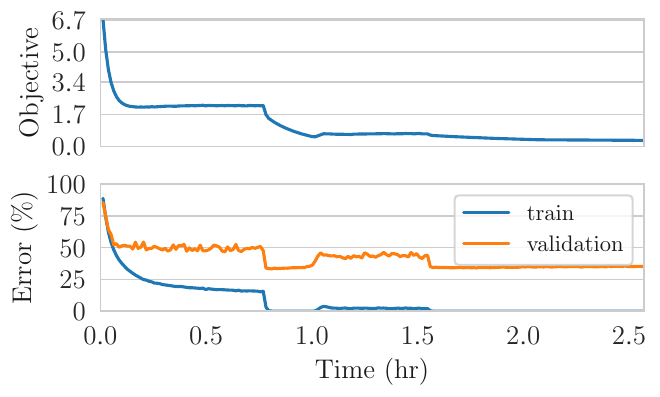}
\captionof{figure}{\em Convergence plot of SGD on CIFAR 100 with WRN architecture.}
\label{fig:sensitivity_cifar100-wrn-sgd}
\end{minipage}%
\hfill
\begin{minipage}{.47\textwidth}
\centering
\includegraphics[width=0.9\linewidth]{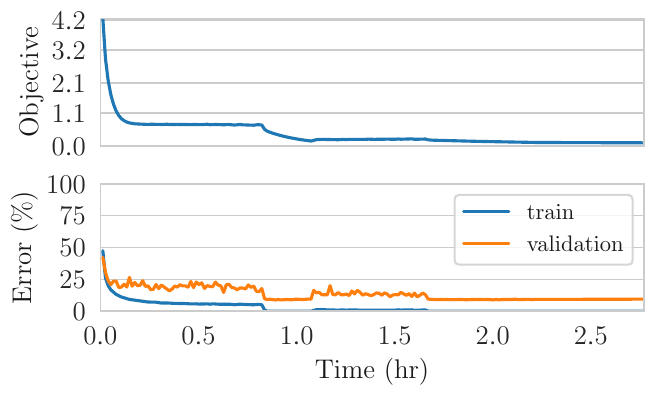}
\captionof{figure}{\em Convergence plot of SGD on CIFAR 10 with WRN architecture.}
\label{fig:sensitivity_cifar10-wrn-sgd}
\end{minipage}
\end{figure}

\subsection{SGD \& DFW: Sensitivity Analysis}
\label{sec:sensitivity_cifar}

We provide here a sensitivity analysis of the DFW algorithm on its hyper-parameter $\eta$, and we compare it against the SGD algorithm with its custom schedule.
These experiments do not use data augmentation.

\begin{figure}[H]
\centering
\includegraphics[width=\linewidth]{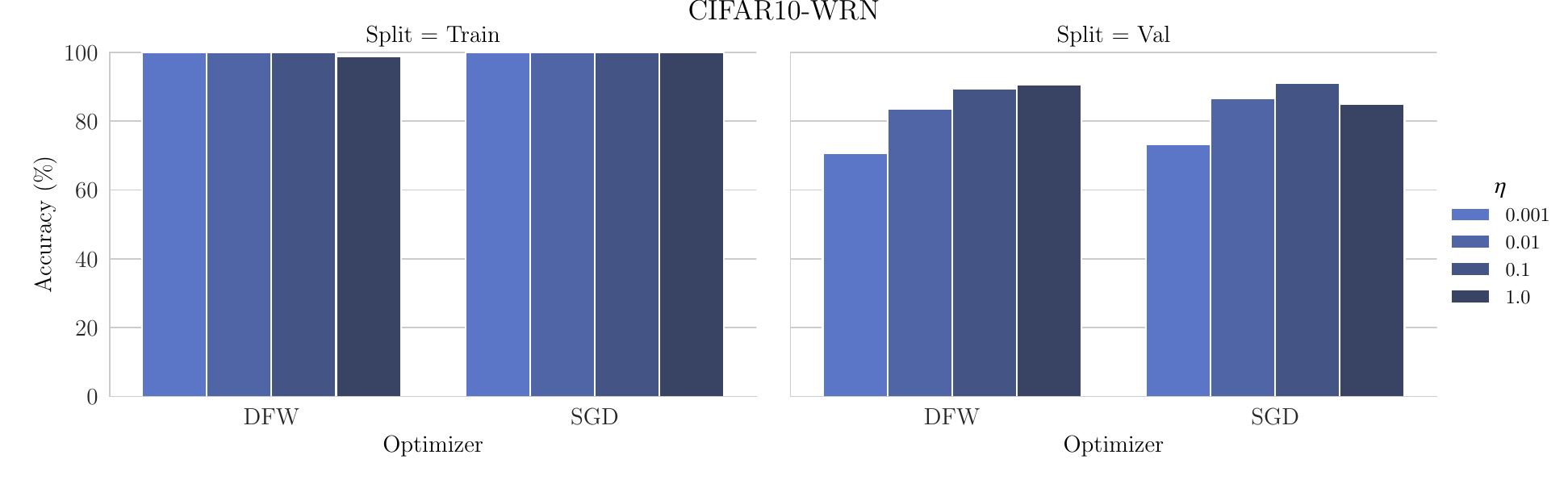}
\caption{\em Sensitivity analysis on the WRN architecture and CIFAR-10 data set.}
\label{fig:sensitivity_wrn_10}
\end{figure}

\begin{figure}[H]
\centering
\includegraphics[width=\linewidth]{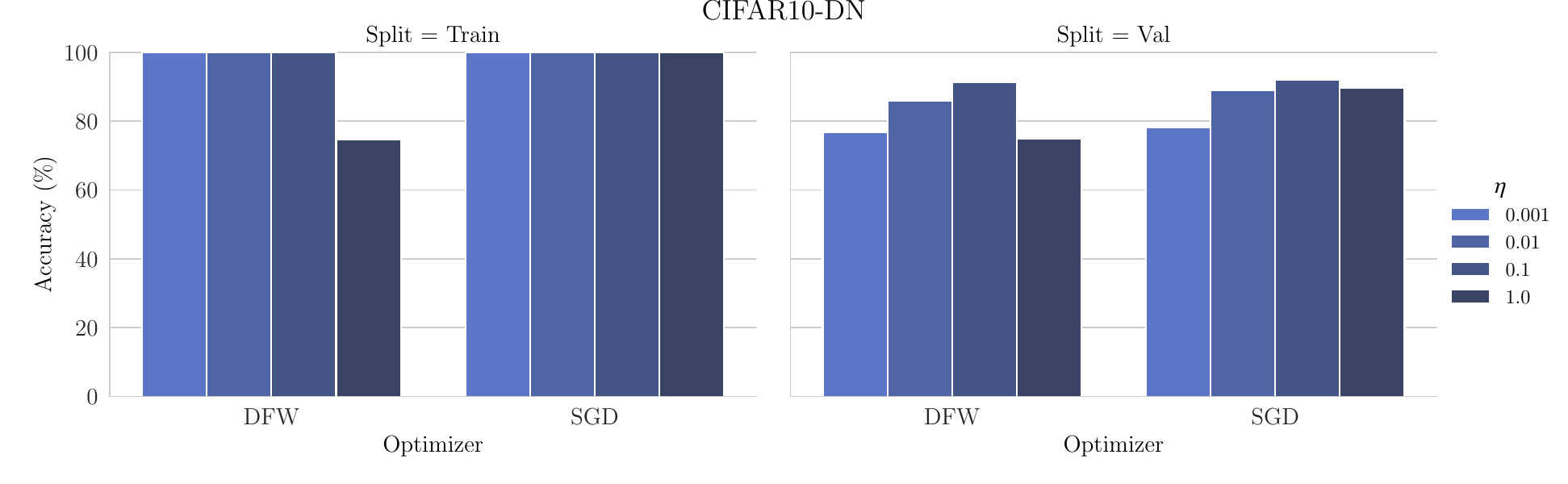}
\caption{\em Sensitivity analysis on the DN architecture and CIFAR-10 data set.}
\label{fig:sensitivity_dn_10}
\end{figure}

\begin{figure}[H]
\centering
\includegraphics[width=\linewidth]{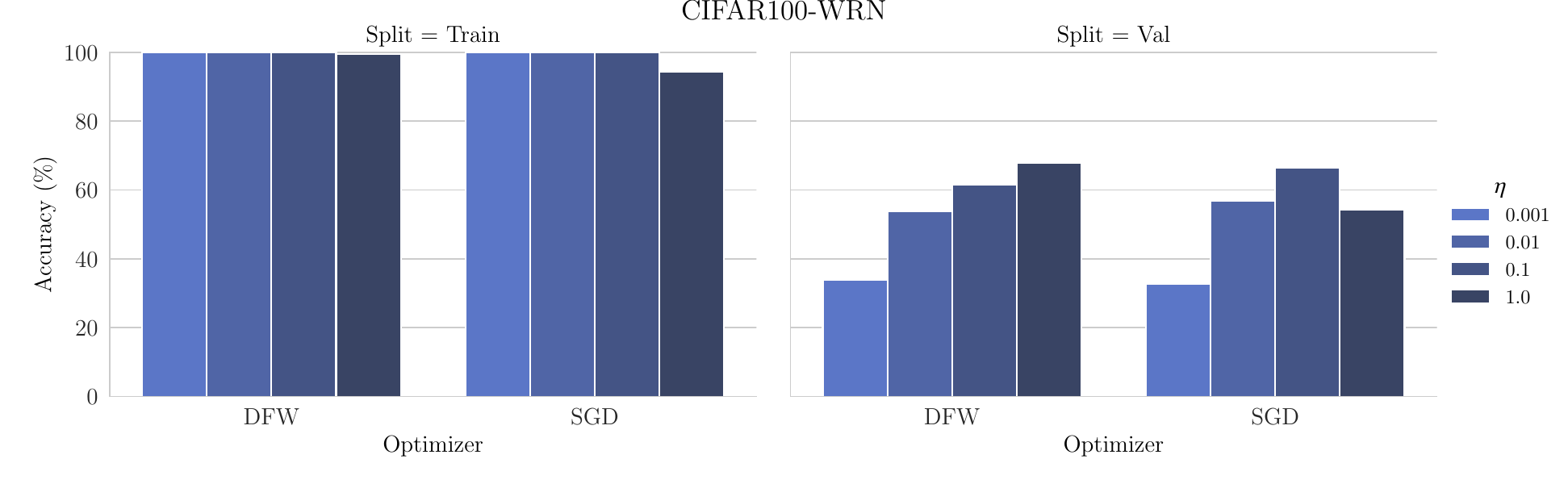}
\caption{\em Sensitivity analysis on the WRN architecture and CIFAR-100 data set.}
\label{fig:sensitivity_wrn_100}
\end{figure}

\begin{figure}[H]
\centering
\includegraphics[width=\linewidth]{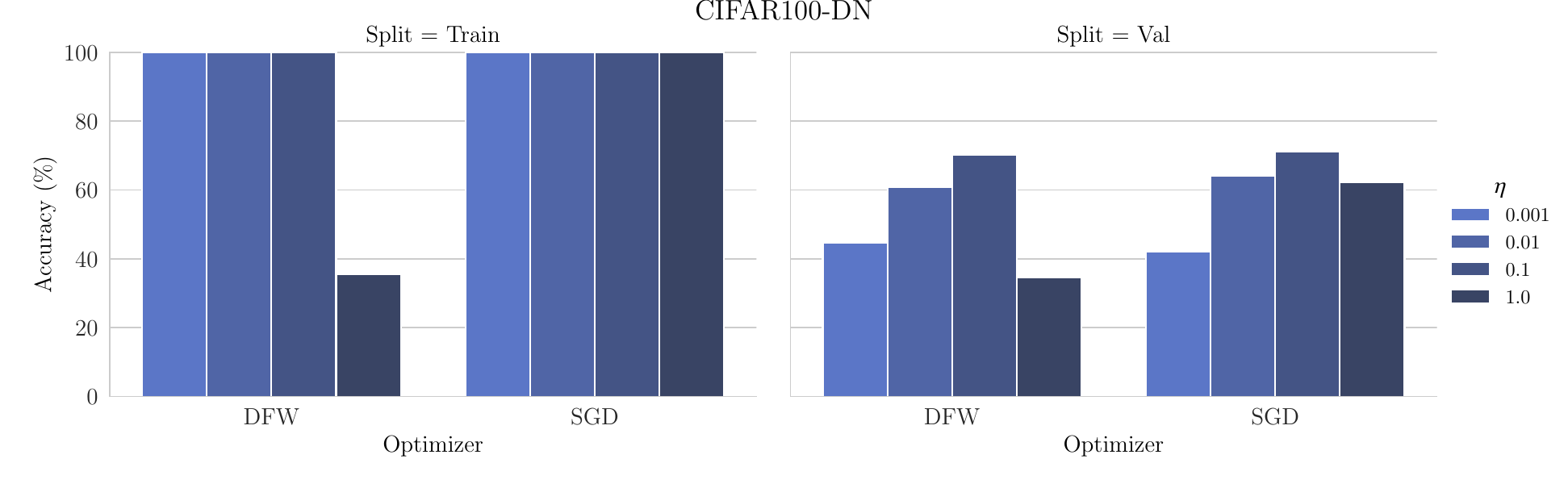}
\caption{\em Sensitivity analysis on the DN architecture and CIFAR-100 data set.}
\label{fig:sensitivity_dn_100}
\end{figure}

\vfill
\pagebreak

\section{Experimental Details on the SNLI Data Set}

\subsection{Cross-Validation}

\begin{table}[H]
\centering
\begin{tabular}{ccc}
\toprule
            $\eta$ & Accuracy CE (\%) & Accuracy SVM (\%) \\
\midrule
          0.001 &           83.43 &            84.16 \\
 {\bf {\bf 0.01}} &      {\bf 83.77} &       {\bf 84.62} \\
            0.1 &           62.09 &             34.5 \\
\
\end{tabular}
\label{tab:adagrad_blstm}
\caption{Cross-Validation for ADAGRAD on BLSTM architecture         (best validation accuracy obtained during training).}
\end{table}

\begin{table}[H]
\centering
\begin{tabular}{ccc}
\toprule
              $\eta$ & Accuracy CE (\%) & Accuracy SVM (\%) \\
\midrule
            1e-05 &           83.18 &            83.02 \\
 {\bf {\bf 0.0001}} &      {\bf 84.56} &       {\bf 84.69} \\
            0.001 &           84.42 &            83.31 \\
             0.01 &           33.82 &            33.82 \\
\
\end{tabular}
\label{tab:adam_blstm}
\caption{Cross-Validation for ADAM on BLSTM architecture         (best validation accuracy obtained during training).}
\end{table}

\begin{table}[H]
\centering
\begin{tabular}{ccc}
\toprule
              $\eta$ & Accuracy CE (\%) & Accuracy SVM (\%) \\
\midrule
1e-05 &           82.81 &            82.95 \\
 {\bf {\bf 0.0001}} &      {\bf 84.69} &       {\bf 84.83} \\
            0.001 &           84.66 &            83.59 \\
             0.01 &           36.78 &            38.25 \\
\
\end{tabular}
\label{tab:amsgrad_blstm}
\caption{Cross-Validation for AMSGRAD on BLSTM architecture         (best validation accuracy obtained during training).}
\end{table}

\begin{table}[H]
\centering
\begin{tabular}{ccc}
\toprule
           $\eta$ & Accuracy CE (\%) & Accuracy SVM (\%) \\
\midrule
         0.001 &           75.51 &            74.87 \\
          0.01 &           83.09 &            83.02 \\
           0.1 &           83.93 &            84.24 \\
 {\bf {\bf 1.0}} &      {\bf 84.28} &       {\bf 84.73} \\
            10 &           33.82 &            33.31 \\
\
\end{tabular}
\label{tab:bpgrad_blstm}
\caption{Cross-Validation for BPGRAD on BLSTM architecture         (best validation accuracy obtained during training).}
\end{table}

\begin{table}[H]
\centering
\begin{tabular}{cc}
\toprule
      $\eta$ & Accuracy (\%) \\
\midrule
      0.1 &        84.87 \\
 {\bf 1.0} &   {\bf 85.21} \\
       10 &        84.76 \\
\
\end{tabular}
\label{tab:dfw_blstm}
\caption{Cross-Validation for DFW on BLSTM architecture         (best validation accuracy obtained during training).}
\end{table}

\begin{table}[H]
\centering
\begin{tabular}{ccc}
\toprule
      $\eta$ & Accuracy CE (\%) & Accuracy SVM (\%) \\
\midrule
     0.01 &           84.22 &            84.59 \\
 {\bf 0.1} &           84.63 &       {\bf 85.15} \\
 {\bf 1.0} &      {\bf 85.06} &             84.7 \\
       10 &           34.59 &            34.51 \\
\
\end{tabular}
\label{tab:sgd_blstm}
\caption{Cross-Validation for SGD on BLSTM architecture         (best validation accuracy obtained during training).}
\end{table}

\vfill

\end{document}